\documentclass[conference]{IEEEtran}
\usepackage[letterpaper,top=0.75in,bottom=1in,left=0.625in,right=0.625in]{geometry}

%%%%%%%%%%%%%%%%%%%%%%%%%%%%%%%%%%%%%%%%%%%%%%%%%%%%%%%%%%%%%%%%%%%%%%%%
%%%%% Draft flag                 %%%%%%%%%%%%%%%%%%%%%%%%%%%%%%%%%%%%%%%
%%%%%%%%%%%%%%%%%%%%%%%%%%%%%%%%%%%%%%%%%%%%%%%%%%%%%%%%%%%%%%%%%%%%%%%%
\newif\ifdraft
\draftfalse

\newif\ifoc % Single column?
\ocfalse

\newif\iffull % Full paper or RSS cut? 
\fulltrue

%%%%%%%%%%%%%%%%%%%%%%%%%%%%%%%%%%%%%%%%%%%%%%%%%%%%%%%%%%%%%%%%%%%%%%%%
%%%%% Side comments   %%%%%%%%%%%%%%%%%%%%%%%%%%%%%%%%%%%%%%%%%%%%%%%%%%
%%%%%%%%%%%%%%%%%%%%%%%%%%%%%%%%%%%%%%%%%%%%%%%%%%%%%%%%%%%%%%%%%%%%%%%%
\ifdraft
%\usepackage[letterpaper,paperwidth=9in,paperheight=11in,
%			top=0.75in,bottom=1in,left=1in,right=1in,
%			footskip=.25in,heightrounded,marginparwidth=1in,
%			marginparsep=0.05in]{geometry}
\setlength{\marginparwidth}{0.6in}
\setlength{\marginparsep}{0.02in}
\usepackage{xcolor}
\usepackage{xargs} 
\usepackage{tikz} 
\usepackage[textsize=footnotesize]{todonotes}
\newcommandx{\sw}[2][1=]{\todo[linecolor=blue,
			backgroundcolor=blue!10,bordercolor=blue,#1]{{\bf SF}: #2}}
\newcommandx{\jy}[2][1=]{\todo[linecolor=red,
			backgroundcolor=red!10,bordercolor=red,#1]{{\bf JY}: #2}}
\newcommandx{\sh}[2][1=]{\todo[linecolor=orange,
			backgroundcolor=orange!10,bordercolor=orange,#1]{{\bf SH}: #2}}
\else
\def\sw#1{}
\def\jy#1{}
\def\sh#1{}
\fi

%%%%%%%%%%%%%%%%%%%%%%%%%%%%%%%%%%%%%%%%%%%%%%%%%%%%%%%%%%%%%%%%%%%%%%%%
%%%%% Packages        %%%%%%%%%%%%%%%%%%%%%%%%%%%%%%%%%%%%%%%%%%%%%%%%%%
%%%%%%%%%%%%%%%%%%%%%%%%%%%%%%%%%%%%%%%%%%%%%%%%%%%%%%%%%%%%%%%%%%%%%%%%
\usepackage{tabularx}
\usepackage{amssymb,amsmath,amsthm}
\usepackage[mathcal]{euscript}
\usepackage{textcomp}
\usepackage{graphicx}
\usepackage{url}
\usepackage{wrapfig}
\usepackage{enumerate}
\usepackage{soul}
\usepackage{tipa}
\usepackage{cite}
\usepackage[percent]{overpic}
\usepackage{tikz}
\usepackage{pgfgantt}
\usepackage{xspace}
\usepackage[linesnumbered,ruled]{algorithm2e}
\usepackage{diagbox}
\usepackage{multirow}

\usepackage[skip=12pt,font=scriptsize]{caption}% http://ctan.org/pkg/caption
\captionsetup[table]{format=plain,labelformat=simple,labelsep=period}%
% \usepackage[numbers]{natbib}
% \usepackage[bookmarks=true]{hyperref}

%%%%%%%%%%%%%%%%%%%%%%%%%%%%%%%%%%%%%%%%%%%%%%%%%%%%%%%%%%%%%%%%%%%%%%%%
%%%%% Theorems        %%%%%%%%%%%%%%%%%%%%%%%%%%%%%%%%%%%%%%%%%%%%%%%%%%
%%%%%%%%%%%%%%%%%%%%%%%%%%%%%%%%%%%%%%%%%%%%%%%%%%%%%%%%%%%%%%%%%%%%%%%%
\newtheorem{theorem}{Theorem}

\newtheorem{proposition}[theorem]{Proposition}

\newtheorem{lemma}[theorem]{Lemma}
\newtheorem{problem}{Problem}

%%%%%%%%%%%%%%%%%%%%%%%%%%%%%%%%%%%%%%%%%%%%%%%%%%%%%%%%%%%%%%%%%%%%%%%%
%%%%% Figure resizing          %%%%%%%%%%%%%%%%%%%%%%%%%%%%%%%%%%%%%%%%%
%%%%%%%%%%%%%%%%%%%%%%%%%%%%%%%%%%%%%%%%%%%%%%%%%%%%%%%%%%%%%%%%%%%%%%%%
% The macros are useful when we need to change the document textwidth. 

%This would resize everything within it to \textwidth

%This would make a box around the item and center it 

%%%%%%%%%%%%%%%%%%%%%%%%%%%%%%%%%%%%%%%%%%%%%%%%%%%%%%%%%%%%%%%%%%%%%%%%
%%%%% Abs and norm             %%%%%%%%%%%%%%%%%%%%%%%%%%%%%%%%%%%%%%%%%
%%%%%%%%%%%%%%%%%%%%%%%%%%%%%%%%%%%%%%%%%%%%%%%%%%%%%%%%%%%%%%%%%%%%%%%%

\usepackage{mathtools,xparse}

%%%%%%%%%%%%%%%%%%%%%%%%%%%%%%%%%%%%%%%%%%%%%%%%%%%%%%%%%%%%%%%%%%%%%%%%
%%%%% Custom label. To use, define a label using 
%%%%%  
%%%%% \customlabel{lablename}{LabelA}
%%%%% 
%%%%% To use the label use 
%%%%% 
%%%%% \ref{labelname}
%%%%%
%%%%% This causes the text 'LabelA' to appear
%%%%%%%%%%%%%%%%%%%%%%%%%%%%%%%%%%%%%%%%%%%%%%%%%%%%%%%%%%%%%%%%%%%%%%%%
\makeatletter
\newcommand{\customlabel}[2]{%
\protected@write \@auxout {}{\string \newlabel {#1}{{#2}{}}}}
\makeatother

%%%%%%%%%%%%%%%%%%%%%%%%%%%%%%%%%%%%%%%%%%%%%%%%%%%%%%%%%%%%%%%%%%%%%%%%
%%%%% Other definitions        %%%%%%%%%%%%%%%%%%%%%%%%%%%%%%%%%%%%%%%%%
%%%%%%%%%%%%%%%%%%%%%%%%%%%%%%%%%%%%%%%%%%%%%%%%%%%%%%%%%%%%%%%%%%%%%%%%

\usepackage{xspace}
\def\R{\mathcal R}
\def\C{\mathcal C}

\def\P{\mathcal P}

\def\W{\mathcal W}
%\thickmuskip=0mu
\def\opg{{\sc {OPG}}\xspace}

\newenvironment{remark}{\noindent\textbf{Remark.} }{\hfill$\triangle$}

\iffull
{}
\else
\setlength{\abovedisplayskip}{3.3pt}
\setlength{\belowdisplayskip}{3.3pt}
\fi

%%%%%%%%%%%%%%%%%%%%%%%%%%%%%%%%%%%%%%%%%%%%%%%%%%%%%%%%%%%%%%%%%%%%%%%%
%%%%% Main tex                 %%%%%%%%%%%%%%%%%%%%%%%%%%%%%%%%%%%%%%%%%
%%%%%%%%%%%%%%%%%%%%%%%%%%%%%%%%%%%%%%%%%%%%%%%%%%%%%%%%%%%%%%%%%%%%%%%%

%\title{{\titlefont Efficient Algorithms for Optimal Perimeter Guarding with A Robotic Swarm}}
\title{Efficient Algorithms for Optimal Perimeter Guarding}

\author{\authorblockN{Si Wei Feng\authorrefmark{1}, 
Shuai D. Han\authorrefmark{1}, 
Kai Gao\authorrefmark{2} and Jingjin Yu\authorrefmark{1}}
\authorblockA{\authorrefmark{1}Department of Computer Science, 
Rutgers University at New Brunswick, Piscataway, New Jersey, U.S.A.  
%\{siwei.feng, shuai.han, jingjin.yu\}@cs.rutgers.edu
}
\authorblockA{\authorrefmark{2}
School of Mathematics, University of Science and Technology of China,
Hefei, Anhui, China
%gaokai15@mail.ustc.edu.cn%
}
}

\pdfinfo{
   /Author (Si-Wei Feng, Shuai D. Han, Kai Gao, and Jingjin Yu)
   /Title  (Efficient Algorithms for Optimal Perimeter Guarding with A Robotic Swarm)
   /CreationDate (January 31st, 2019)
   /Subject (Robotics)
   /Keywords (Algorithms, Perimeter Guarding, Multi-Robot Systems, Swarm, Optimality)
}
\begin{document}
\maketitle
\iffull  \else \vspace*{-6mm} \fi
\begin{abstract}We investigate the problem of optimally assigning 
a large number of robots (or other types of autonomous agents) to 
guard the perimeters of closed 2D regions, where the perimeter 
of each region to be guarded may contain multiple disjoint polygonal
chains. Each robot is responsible for guarding a subset of a 
perimeter and any point on a perimeter must be guarded by some robot. 
In allocating the robots, the main objective is to minimize the 
maximum 1D distance to be covered by any robot along the boundary 
of the regions. For this optimization problem which we call optimal 
perimeter guarding (\opg), thorough structural analysis is performed, 
which is then exploited to develop fast exact algorithms that run in 
guaranteed low polynomial time. In addition to formal analysis and proofs, 
experimental evaluations and simulations are performed that further 
validate the correctness and effectiveness of our algorithmic results. 
%\showthe\font
\end{abstract}

\section{Introduction}\label{section:introduction}
Consider the scenario from Fig.~\ref{fig:example}, which contains a 
closed region with its boundary or border demarcated by the red 
and dotted blue polygonal chains (p-chains for short). To 
secure the region, either from intrusions from the outside or unwanted 
escapes from within, it is desirable to deploy a number of autonomous 
robots to monitor or guard either the entire boundary or selected 
portions of it, e.g., the three red p-chains), with each robot 
responsible for a continuous section. Naturally, one might also want 
to have an even coverage by the robots, e.g., minimizing the maximum 
effort from any robot. In practice, such effort may correspond to 
sensing ranges or motion capabilities of robots, which are always 
limited. As an intuitive example, the figure may represent the top 
view of a castle with its entire boundary being a high wall on which 
robots may travel. The portion of the wall marked with the three 
red p-chains must be protected whereas the part marked by the dotted 
blue p-chains may not need active monitoring (e.g., the outside of which 
may be a cliff or a body of deep water). The green and orange p-chains 
show an optimal distribution of the workload by $8$ robots that covers 
all red p-chains but skips two of the three blue dotted p-chains.

More formally we study the problem of deploying a large 
number of robots to guard a set of 1D {\em perimeters}. 
Each perimeter is comprised of one or more 1D (p-chain) {\em segments} 
that are part of a circular boundary (e.g., the red p-chains in 
Fig.~\ref{fig:example}). Each robot is tasked to guard a continuous 1D 
p-chain that {\em covers} a portion of a perimeter. As the main 
objective, we seek an allocation of robots such that {\em (i)} the 
union of the robots' coverage encloses all perimeters and {\em (ii)} 
the maximum coverage of any robot is minimized. We call this 1D 
deployment problem the Optimal Perimeter Guarding (\opg) problem. 
\begin{figure}[ht]
\iffull \vspace*{2mm} \else \vspace*{0mm} \fi
\begin{center}
\begin{overpic}[width={\iffull 3.5in \else 3.2in \fi},tics=5]{./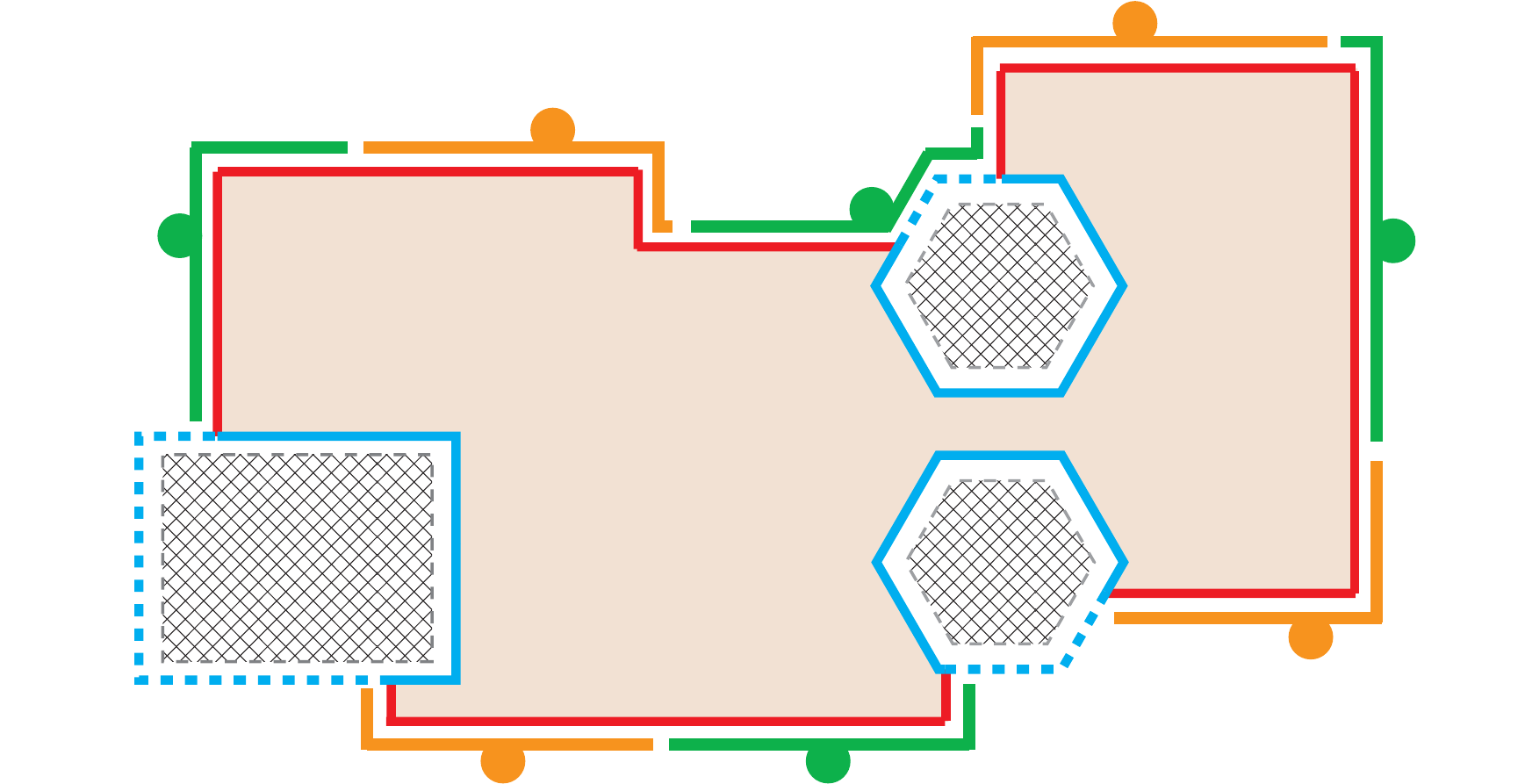}
%\put(26,20){{\small $R_1$}}
%\put(20,39){{\small \textcolor{red}{$P_1$}}}
%\put(66,28){{\small $R_2$}}
%\put(54,40){{\small \textcolor{green}{$P_2$}}}
%\put(82,44){{\small $\W$}}
\end{overpic}
\end{center}
\iffull {} \else \vspace*{-5mm} \fi
\caption{\label{fig:example} An illustrative scenario where a perimeter, 
in this case represented as the red polygonal chains (p-chains), must be 
guarded by $n = 8$ robots, which are constrained to only travel along 
the perimeter boundary (the red p-chains plus the dotted blue ones, which 
are gaps that do not need to be guarded). An optimal set of locations for 
the $8$ robots and the coverage region for each robot are marked on the 
perimeter boundary in green and orange, which minimizes the maximum coverage 
required for any robot.}
\iffull {} \else \vspace*{-8mm} \fi
\end{figure}

In this work, three main \opg variants are examined. The settings 
regarding the perimeter in these three variants are: {\em (i)}
multiple perimeters with each having a single connected component; 
{\em (ii)} a single perimeter containing multiple connected components; 
and {\em (iii)} multiple perimeters with each containing multiple 
connected components (the most general case). For all three variants, 
we have developed exact algorithms for solving \opg that runs in low
polynomial time. More specifically, let there be $n$ robots, $m$ 
perimeters, with perimeter $i$ ($1 \le i \le m$) containing $q_i$ 
connected components. If $m = 1$, then let the only perimeter contains
$q$ connected components. For the three variants, our algorithm 
computes an optimal solution in time $O(m(\log n + \log m) + n)$, 
$O(q^2\log(n+q) + n)$, and $O((\sum_{1\le i \le m} q_i^2) \log(n + 
\sum_{1\le i \le m} q_i) + n)$, respectively, which
are roughly quadratic in the worst case. 
The modeling of the \opg problem and the development of the efficient 
algorithms for \opg constitute the main contribution of this paper. 

With an emphasis on the deployment of a large number of robots, 
within multi-robot systems research 
\cite{arai2002advances,gerkey2004formal,ren2008distributed,bullo2009distributed}, 
our study is closely related to {\em formation control}, e.g., 
\cite{ando1999distributed,jadbabaie2003coordination,olfati2004consensus,ren2005consensus,cheng2008almost,mesbahi2010graph,yu2012rendezvous},
where the goal is to achieve certain {\em distributions} through 
continuous (often, local sensing based) interactions among the 
agents or robots. Depending on the particular setting, the 
distribution in question may be spatial, e.g., rendezvous
\cite{ando1999distributed,yu2012rendezvous}, or maybe an agreement 
in agent velocity is sought \cite{jadbabaie2003coordination,ren2005consensus}. 
In these studies, the resulting formation often has some 
degree-of-freedoms left unspecified. For example, rendezvous 
results \cite{ando1999distributed,yu2012rendezvous} often come 
with exponential convergence guarantee, but the location of
rendezvous is generally unknown {\em a priori}. 

On the other hand, in multi-robot task and motion planning problems (e.g.,
\cite{smith2009monotonic,ayanian2010decentralized,liu2011multi,liu2013optimal,turpin2014goal,turpin2014capt,alonso2015multi,SolYu15}), 
especially ones with a {\em task allocation} element 
\cite{smith2009monotonic,liu2011multi,liu2013optimal,turpin2014goal,turpin2014capt,SolYu15},
the (permutation-invariant) target configuration is often mostly 
known. The goal here is finding a one-to-one mapping between individual 
robots and the target locations (e.g., deciding a {\em matching}) and 
then plan (possibly collision-free) trajectories for the robots to reach 
their respective assigned targets \cite{turpin2014goal,turpin2014capt,SolYu15}.  
In contrast to formation control and multi-robot motion planning research, 
our study of \opg seeks to determine an exact, optimal distribution 
pattern of robots (in this case, over a fairly arbitrary, bounded 1D 
topological domain). Thus, solutions to \opg may serve as the target 
distributions for multi-robot task and motion planning, which is the 
main motivation behind our work. The generated distribution pattern is 
also potentially useful in multi-robot persistent monitoring 
\cite{soltero2014decentralized} and coverage \cite{howard2002mobile,schwager2009optimal} 
applications, where robots are asked to carry out sensing tasks in some 
optimal manner. 
	
As a multi-robot coverage problem, \opg is intimately connected to Art 
Gallery problems \cite{o1987art,shermer1992recent}, with origins traceable 
to half a century ago \cite{klee1969every}. Art Gallery problems assume 
a visibility-based \cite{lozano1979algorithm} sensing model; in a typical 
setup \cite{o1987art}, the {\em interior} of a polygon must be visible to at 
least one of the guards, which may be placed on the boundaries, corners, 
or the interior of the polygon. Finding the optimal number of guards are 
often NP-hard \cite{lee1986computational}. Alternatively, disc-based sensing 
model may be used, which leads to the classical {\em packing} problem 
\cite{thue1910dichteste,hales2005proof}, where no overlap is allowed between 
the sensors' coverage area, the {\em coverage} problem 
\cite{drezner1995facility,cortes2004coverage,pavone2009equitable,martinez2010distributed,pierson2017adapting}, 
where all 
workspace must be covered with overlaps allowed, or the {\em tiling} problem 
\cite{kershner1968paving}, where the goal is to have the union of sensing
ranges span the entire workspace without overlap. For a more complete 
account on Art Gallery, packing, and covering, see Chapters 2, 3, and 33 of
\cite{toth2017handbook}. Despite the existence of a large body of literature 
performing extensive studies on these intriguing computational geometry 
problems, these types of research mostly address domains that are 2D and 
higher. To our knowledge, \opg, as an optimal coverage problem over 
a non-trivial 1D topological space, represents a practical and novel formulation 
yet to be fully investigated. 

The rest of the paper is organized as follows. 
The \opg problem and some of its most basic properties are described 
in Section~\ref{section:problem}. 
In Section~\ref{section:analysis}, a thorough structural analysis of \opg 
with single and multiple perimeters is performed, paving the way for 
introducing the full algorithmic solutions in Section~\ref{section:algorithm}.
Then, in Section~\ref{section:evaluation}, comprehensive numerical 
evaluations of the multiple polynomial-time algorithms are carried out. 
In addition, two realistic application scenarios are demonstrated. 
In Section~\ref{section:conclusion}, we conclude with  
additional discussions.

\section{The Optimal Perimeter Guarding Problem}\label{section:problem}
%\subsection{Problem Statement for Perimeter Guarding}
Let $\W \subset \mathbb R^2$ be a compact (i.e., closed and bounded) 
two-dimensional workspace. There are  $m$ pairwise disjoint {\em 
regions} $\R = \{R_1, \ldots, R_m\}$ where each region $R_i \subset \W$ 
is homeomorphic to the closed unit disc, i.e., there exists a continuous 
bijection $f_i: R_i \to \{(x, y) \mid x^2 + y^2 \le 1\}$ for all $1 \le 
i \le m$. For a given region $R_i$, let $\partial R_i$ be its (closed) 
boundary (therefore, $f_i$ maps $\partial R_i$ to the unit circle  
$\mathbb S^1$). With a slight abuse of notation, define $\partial \R 
= \{\partial R_1, \ldots, \partial R_m\}$. For each $\R_i$, $P_i \subset 
\partial R_i$ is called the {\em perimeter} of $R_i$ which is either a 
single closed curve or formed by a finite number of possibly curved line 
segments. In this paper, we assume a perimeter is given as a single p-chain (possibly
a polygon) or multiple disjoint p-chains. Let $\P = \{P_1, \ldots, P_m\}$, 
which must be {\em guarded}. More formally, each $P_i$ is 
homeomorphic to a compact subset of the unit circle. For 
a given $P_i$, each of its maximal connected component (a p-chain) is 
called a {\em perimeter segment} or {\em segment}, whereas each 
maximal connected component of $\partial R_i \backslash P_i$ is called a 
{\em perimeter gap} or {\em gap}. An example is illustrated in 
Fig.~\ref{fig:example-boundaries} with two regions. 

\begin{figure}[ht]
\iffull {} \else \vspace*{-1mm} \fi
\begin{center}
\begin{overpic}[width={\iffull 3.5in \else 3in \fi},tics=5]
{./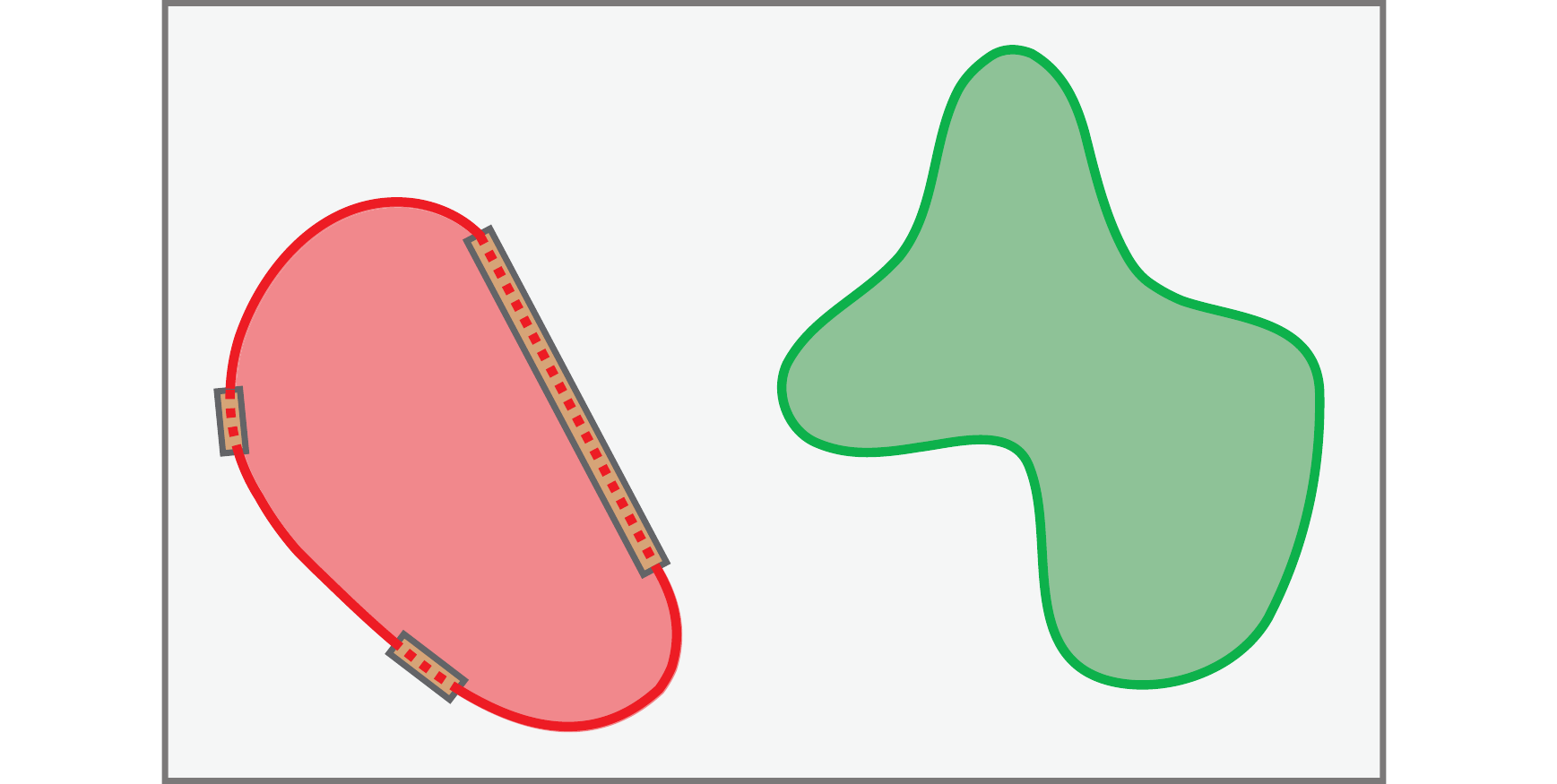}
\put(26,20){{\small $R_1$}}
\put(20,39){{\small \textcolor{red}{$P_1$}}}
\put(66,28){{\small $R_2$}}
\put(54,40){{\small \textcolor{green}{$P_2$}}}
%\put(79.6,18){{\small $h$}}
\put(82,44){{\small $\W$}}
\end{overpic}
\end{center}
\iffull {} \else \vspace*{-4mm} \fi
\caption{\label{fig:example-boundaries} An example of a workspace $\W$ 
with two regions $\{R_1, R_2\}$. Due to three {\em gaps} on $\partial R_1$, 
marked as dotted lines within long rectangles, $P_1 \subset \partial R_1$ 
has three {\em segments} (or maximal connected components); $P_2 = \partial 
R_2$ has a single segment with no gap.}
\iffull {} \else \vspace*{-3mm} \fi
\end{figure}

There are $n$ indistinguishable point robots residing in $\W$.
%and outside of any $R_i \in \R$, $1 \le i \le m$. 
These robots are to be deployed to {\em cover} the perimeters $\P$ such 
that each robot $1 \le j \le n$ is assigned a continuous closed subset 
$C_j$ of some $\partial R_i, 1 \le i \le m$. All of $\P$ must be 
{\em covered} by $\C = \{C_1, \ldots, C_n\}$, i.e., 
$\bigcup_{P_i \in \P} P_i  \subset \bigcup_{C_j \in \C} C_j$,
which implies that elements of $\C$ need not intersect on their
interiors. Hence, it is assumed that any two elements of $\C$ may share 
at most their endpoints. Such a $\C$ is called a {\em cover} of $\P$. 

Given a cover $\C$, for a $C_j \in \C$, $1 \le j \le n$, let $len(C_j)$ 
denote its length (more formally, measure). It is desirable to minimize 
the maximum $len(C_j)$, i.e., the goal is to find a cover $\C$ such that 
the value 
$\max_{C_j \in \C} len(C_j)$
is minimized. This corresponds to minimizing the maximum workload for 
each robot or agent. The formal definition of the Optimal Perimeter 
Guarding (\opg) problem is provided as follows. 

\begin{problem}[Optimal Perimeter Guarding (\opg)] Given the perimeter 
$\P = \{P_1, \ldots, P_m\}$ of a set of 2D regions $\R =\{R_1, \ldots, 
R_m\}$, find a set of $n$ polygonal chains $\C^* = \{C_1^*, 
\ldots, C_n^*\}$ such that $\C^*$ covers $\P$, i.e., 
\begin{align}\label{eq:coverage}
\bigcup_{P_i \in \P} P_i  \subset \bigcup_{C_j^* \in \C^*} C_j^*,
\end{align}
with the maximum of $len(C_j^*), 1 \le j \le n$ minimized, i.e., among all 
covers $\C$ satisfying~\eqref{eq:coverage}, 
\begin{align}\label{eq:objective}
\C^* = \underset{\C}{\mathrm{argmin}} \max_{C_j \in \C} len(C_j).
\end{align}
\end{problem}

Here, we introduce the technical assumption that the ratio between 
the length of $\partial \R$ and the length of $\partial \P$ is polynomial
in the input parameters. That is, the length of $\partial \R$ is not much 
larger than the length of $\partial \P$ . 
The assumption makes intuitive sense as any 
gap should not be much larger than the perimeter in practice. We note 
that the assumption is not strictly necessary but helps simplify the 
correctness proof of some algorithms. 

Henceforth, in general, $\C^*$ is used when an optimal cover is meant 
whereas $\C$ is used when a cover is meant. We further define the optimal 
single robot coverage length as 
\begin{align}\label{eq:l-star}
\iffull {} \else \vspace*{-1mm} \fi
\ell^* = \min_{\C} \max_{C_j \in \C} len(C_j).
\iffull {} \else \vspace*{-1mm} \fi
\end{align}

Fig.~\ref{fig:example} shows an example of an optimal cover by $8$ robots 
of a perimeter with three components. Note that one of the three gaps 
(the one on the top area as part of the hexagon) is fully covered 
by a robot, which leads to a smaller $\ell^*$ as compared to other 
feasible solutions. This interesting phenomenon, which is 
actually a main source of the difficulty in solving \opg, is explored 
more formally in Section~\ref{section:analysis} 
(Proposition~\ref{p:max-no-exclusion}).

Given the \opg formulation, additional details on $\partial \R$ must 
be specified to allow the precise characterization of the computational 
complexity (of any algorithm developed for \opg). For this purpose, it 
is assumed that each $\partial R_i \in \partial \R$, $1 \le i \le m$, 
is a simple (i.e., non-intersecting and without holes) polygon with an 
input complexity $O(M_i)$, i.e., $\partial R_i$ has about $M_i$ vertices 
or edges. If an \opg has a single region $R$, then let $\partial R$ have 
an input complexity of $M$. Note that the algorithms developed in this 
work apply to curved boundaries equally well, provided that the curves 
have similar input complexity and are given in a format that allow the 
computation of their lengths with the same complexity. 
Alternatively, curved boundaries may be approximated to arbitrary 
precision with polygons. 

%\jy{Add an example showing an optimal cover for maybe some $6-10$ robots?
%This can probably go in the introduction to motivate the paper. We need 
%some 1-2 good examples. Need to refer back to the example here.}

For deploying a robot to guard a $C_j$, one natural choice is to send the 
robot to a target location $t_j \in C_j$ such that $t_j$ is the centroid of $C_j$. 
Since $C_j$ is one dimensional, $t_j$ is the center (or midpoint) of $C_j$. 
After solving an \opg, there is the remaining problem of assigning the $n$ 
robots to the centers of $\C^* = \{C_j^*\}$ and actually moving the robots 
to these assigned locations. As a secondary objective, it may also be 
desirable to provide guarantees on the execution time required for 
deploying the robots to reach target guarding locations. We note that, 
the task assignment (after determining target locations) and motion 
planning component for handling robot deployment, essential for applications 
but not a key part of this work's contribution, is briefly addressed in 
Section~\ref{section:evaluation}. 
\jy{Here we are doing sequential optimization and can provide some 
guarantees. Can we do concurrent optimization here? It seems possible but 
also hard. We might be able to show that the two objectives cannot be 
simultaneously satisfied.}

With some $\C^*$ satisfying~\eqref{eq:coverage} 
and~\eqref{eq:objective}, we may further require that $len(C_j^*)$ is 
minimized for all $C_j^* \in \C^*$. This means that a gap $G \subset 
((\bigcup \partial R_i)\backslash (\bigcup P_i))$ will never be partially 
covered by some $C_j^* \in \C^*$. 
%because if that is the case, $C_j^*$ 
%needs not cover any part of $G$ at all (and should not, to limit the 
%coverage of the assigned robot). 
In the example from 
Fig.~\ref{fig:example-boundaries}, $G$ may be one of the gaps on 
$\partial R_1$; clearly, it is not beneficial to have some 
$C_j^*$ partially cover (i.e., intersect the interior of) one of these. 
This rather useful condition (note that this is not an assumption but 
a solution property) yields the following lemma. 

\begin{lemma}\label{l:no-partial-coverage} 
For a set of perimeters $\P = \{P_1,\ldots, P_m\}$ where $P_i \subset 
\partial R_i$ for $1 \le i \le m$, there exists an optimal cover $\C^* 
= \{C_1^*, \ldots, C_n^*\}$ such that, for any gap (or maximal connected
component) $G \subset ((\bigcup \partial R_i)\backslash (\bigcup P_i))$ 
and any $C_j^* \in \C^*$, $C_j^* \cap G = G$ or $C_j^* \cap G = 
\varnothing$. 
\end{lemma}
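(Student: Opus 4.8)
The plan is to argue by a local-trimming (exchange) argument: start from an arbitrary optimal cover and repeatedly delete any portion of a coverage arc that protrudes into a gap, showing that each deletion keeps the cover both feasible and optimal, and that the process terminates in a cover with the claimed property. The one insight doing all the work is that a gap carries no perimeter, so any coverage placed inside a gap is ``free'' to remove.

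First I would fix any optimal cover $\C = \{C_1, \ldots, C_n\}$ achieving $\ell^*$ (such a cover exists since $\ell^*$ in~\eqref{eq:l-star} is attained), and recall that each $C_j$ is a single connected arc of some $\partial R_i$, while each gap $G$ is a maximal open connected component of $(\bigcup \partial R_i)\backslash(\bigcup P_i)$ carrying no perimeter point. The crucial preliminary observation is a dichotomy: the only way a connected arc $C_j$ can meet the interior of a gap $G$ without containing all of $G$ is for an endpoint of $C_j$ to lie strictly inside $G$; otherwise $C_j$ either misses $\mathrm{int}(G)$ entirely or contains the whole closed gap $\overline G$.

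Next I would define the trimming operation. If an endpoint of some $C_j$ lies in the interior of a gap $G = (g_1, g_2)$, I retract that endpoint along $C_j$ to the adjacent gap-boundary point (a segment endpoint), i.e.\ I delete the sub-arc of $C_j$ lying inside $\overline G$ up to the first segment endpoint. Two facts make this safe: \emph{(i)} feasibility is preserved, because the deleted portion lies inside a gap and hence covers no point of any $P_i$, so $\bigcup_j C_j$ still contains $\bigcup_i P_i$ and~\eqref{eq:coverage} still holds; and \emph{(ii)} optimality is preserved, because $len(C_j)$ can only decrease, so $\max_j len(C_j)$ remains equal to $\ell^*$. After the retraction, $C_j \cap G = \varnothing$ for that gap, whereas any gap already lying in the interior of $C_j$ is left untouched and keeps satisfying $C_j \cap G = G$.

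Finally I would argue termination and conclude. Each arc has only two endpoints, and a single retraction moves an endpoint out of a gap interior to a segment endpoint, which lies in no gap interior; hence at most $2n$ retractions are needed, after which no endpoint of any $C_j$ sits in a gap interior. Applying the dichotomy from the first step, for every gap $G$ and every $C_j$ we then have either $C_j \cap G = \varnothing$ or $C_j \cap G = G$, and the cover is still optimal, which is exactly the desired $\C^*$. The main point to handle carefully is the bookkeeping for degenerate situations: an arc whose \emph{entire} length sat inside a gap collapses to a single point of length $0$ and is harmless, and arcs sharing an endpoint inside a gap can be retracted independently without uncovering perimeter, since the segments flanking a gap are covered separately. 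This edge-case analysis, rather than the core exchange step, is where I expect the only real friction.
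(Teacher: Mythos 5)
Your proof is correct and follows essentially the same route as the paper, which states the lemma with only a one-line informal justification (``we may further require that $len(C_j^*)$ is minimized for all $C_j^* \in \C^*$ \ldots clearly, it is not beneficial to have some $C_j^*$ partially cover'' a gap); your trimming argument is precisely the formalization of that remark. The endpoint dichotomy, the feasibility/optimality preservation under retraction, and the termination bound are all sound, and the degenerate cases you flag are indeed harmless.
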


\begin{remark} Our definition of coverage is but one of the possible 
models of coverage. The definition restricts a robot deployed to 
$C_j, 1 \le j \le n$, to essentially {\em live} on $C_j$. The definition 
models scenarios where a guarding robot must travel along $C_j$, which 
is one-dimensional. Nevertheless, the algorithms developed for \opg have 
broader applications. For example, subroutines in our algorithms readily 
solve the problem of finding the minimum number of guards needed if each 
guard has a predetermined maximum coverage. 
%As we will demonstrate later, by selecting the 
%boundaries (i.e., $\{\partial R_i\}$) carefully, the model applies 
%to diverse practical application scenarios.
\end{remark}

\section{Structural Analysis}\label{section:analysis}
In designing efficient algorithms, the solution structure of \opg induced by 
the problem formulation is first explored, starting from the case where there 
is a single region.

\subsection{Guarding a Single Region}
\noindent\textbf{Perimeter with a single connected component}. For 
guarding a single region $\R = \{R\}$, i.e., there is a single 
boundary $\partial R$ to be guarded, all $n$ robots can be directly 
allocated to $\partial R$. If the single perimeter $P \subset 
\partial R$ further has a single connected component that is either 
homeomorphic to $\mathbb S^1$ or $[0, 1]$, then each robot $j$ can 
be assigned a piece $C_j \subset P$ such that $\bigcup_{C_j \in \C} 
C_j = P$ and $len(C_j) = len(P)/n$. Clearly, such a cover $\C$ is 
also an optimal cover. 

\noindent\textbf{Perimeter with multiple maximal connected components}. 
When there are multiple maximal connected components (or segments) in a 
single perimeter $P$, things become more complex. To facilitate the 
discussion, assume here $P$ has $q$ segments $S_1, \ldots, S_q$ arranged 
in the clockwise direction (i.e., $P = S_1 \cup \ldots \cup S_q$), 
which leaves $q$ gaps $G_1, \ldots, G_q$ with $G_k$ immediately following 
$S_k$. Fig.~\ref{fig:single-perimeter} shows a perimeter with five segments
and five gaps. 
\begin{figure}[ht]
\iffull {} \else \vspace*{-1mm} \fi
\begin{center}
\begin{overpic}[width={\iffull 3in \else 2.6in \fi},tics=5]
{./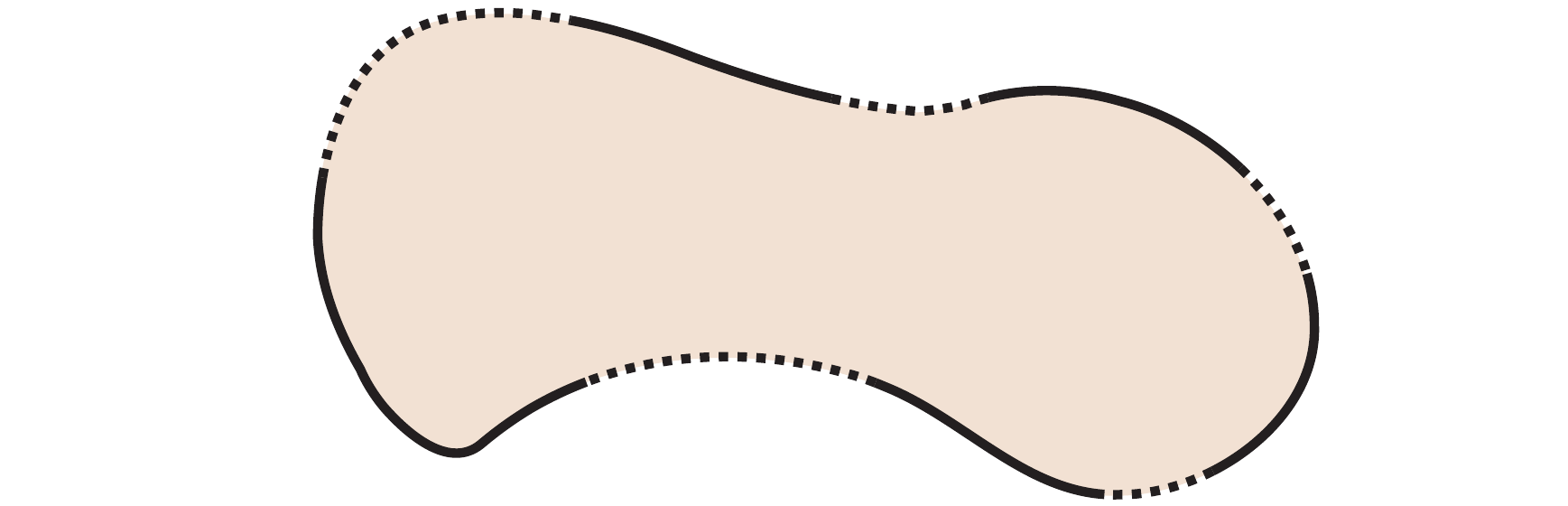}
\put(16,7){{\small $S_1$}}
\put(17,31){{\small $G_1$}}
\put(46,30){{\small $S_2$}}
\put(56.5,28){{\small $G_2$}}
\put(72,27.5){{\small $S_3$}}
\put(84.5,17.5){{\small $G_3$}}
\put(85,6){{\small $S_4$}}
\put(72,-4){{\small $G_4$}}
\put(59,0){{\small $S_5$}}
\put(45,4){{\small $G_5$}}
\end{overpic}
\end{center}
\iffull {} \else \vspace*{-1mm} \fi
\caption{\label{fig:single-perimeter} A perimeter with five segments 
$S_1, \ldots, S_5$ and five {\em gaps} $G_1, \ldots, G_5$.}
\iffull {} \else \vspace*{-3mm} \fi
\end{figure}

Suppose an optimal set of assignments for the $n$ robots guarding $P$ 
and  satisfying~\eqref{eq:coverage} and~\eqref{eq:objective} is $\C^* 
= \{C_j^*\}$. Let $G_{max}$ be a largest gap, i.e., $len(G_{max}) = 
\max_{1 \le k \le q}len(G_k)$. Via small perturbations to the lengths 
of $G_k$, we may also assume that $G_{max}$ is unique. On one hand, 
it must hold that $len(C_j^*) \le (len(\partial R) - len(G_{max}))/n$,
as a solution where $n$ robots evenly cover all of $\partial R$ with 
the gap $G_{max}$ excluded, satisfies the condition. On the other hand, 
$len(C_j^*) \ge (\sum_{1\le k\le q}len(S_k))/n$ always holds because 
the coverage condition requires $\sum_j C_j^* \ge \sum_{1\le k\le q}len(S_k)$. 
These yield a pair of basic upper and lower bounds for the optimal single 
robot coverage length $\ell^*$, summarized as follows. 
\jy{We may address the issue of solution stability with respect to 
perturbation (later).}

\begin{proposition}\label{p:single-bounds} Define 
\[
\ell_{min} = \frac{\sum_{1\le k\le q}len(S_k)}{n}\,\,and\,\,
\ell_{max} = \frac{len(\partial R) -  len(G_{max})}{n},
\]
it holds that 
\begin{align}\label{eq:lopt} 
\ell_{min} \le \ell^* \le \ell_{max}.
\end{align}
\end{proposition}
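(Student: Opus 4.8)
The plan is to establish the two inequalities in~\eqref{eq:lopt} independently: the lower bound $\ell^* \ge \ell_{min}$ by an averaging argument that holds for \emph{every} cover, and the upper bound $\ell^* \le \ell_{max}$ by exhibiting one concrete feasible cover whose maximum piece length equals $\ell_{max}$.

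For the lower bound I would start from the coverage requirement~\eqref{eq:coverage}. Since $\bigcup_{k} S_k = P \subset \bigcup_j C_j$, subadditivity of length gives $\sum_{j} len(C_j) \ge len\bigl(\bigcup_j C_j\bigr) \ge len(P) = \sum_{1\le k\le q} len(S_k)$. Averaging over the $n$ robots, the largest piece satisfies $\max_j len(C_j) \ge \frac{1}{n}\sum_j len(C_j) \ge \frac{\sum_k len(S_k)}{n} = \ell_{min}$. Because this holds for an arbitrary cover $\C$, it holds in particular for a minimizer, yielding $\ell^* \ge \ell_{min}$.

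For the upper bound the key observation is topological: $\partial R$ is homeomorphic to $\mathbb S^1$, so deleting the interior of the single largest gap $G_{max}$ leaves a single connected arc $A = \partial R \setminus \mathrm{int}(G_{max})$ of length $len(\partial R) - len(G_{max})$. Since $G_{max}$ is a gap, it is disjoint from $P$, hence every segment $S_k$, and thus all of $P$, lies in $A$. I then partition $A$ into $n$ consecutive subarcs of equal length $\frac{len(\partial R) - len(G_{max})}{n} = \ell_{max}$ and assign one to each robot. This cover satisfies~\eqref{eq:coverage} (it covers all of $A \supset P$) and has maximum piece length exactly $\ell_{max}$; by optimality~\eqref{eq:objective} this forces $\ell^* \le \ell_{max}$.

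The argument is elementary and I do not expect a genuine obstacle. The one point requiring care is the upper-bound construction: it relies on excising a \emph{single} gap so that the remainder stays connected (removing two or more gaps would disconnect the circle and prevent an even split along one arc), which is precisely why the largest gap is the natural choice and why the bound takes the form $len(\partial R) - len(G_{max})$. I would also note in passing that $\ell_{min}$ and $\ell_{max}$ need not coincide; the freedom to either fully skip a gap or partially absorb it into a robot's coverage is exactly the source of difficulty flagged for Proposition~\ref{p:max-no-exclusion}.
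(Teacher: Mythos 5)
Your proof is correct and follows essentially the same route as the paper: the lower bound via the coverage condition forcing $\sum_j len(C_j) \ge \sum_k len(S_k)$ plus averaging, and the upper bound by exhibiting the feasible cover that excises only $G_{max}$ and splits the remaining connected arc of $\partial R$ evenly among the $n$ robots. Your added remark about why excising a \emph{single} gap is needed to keep the remainder connected is a useful clarification the paper leaves implicit.
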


Though some gap, if there at least one, must be skipped by the optimal 
solution, it is not always the case that a largest gap $G_{max}$, 
even if unique, will be skipped by $\bigcup_{C_j \in \C^*} C_j^*$. 
That is, an optimal cover $C^*$ may enclose the largest gap. 
 
\begin{proposition}\label{p:max-no-exclusion}
Given a region $R$ and perimeter $P \subset \partial R$, let
$G_{max}$ be the unique longest connected component of $\partial R
\backslash P$. Let $\C^*$ be an optimal cover of $P$. Then, there 
exist \opg instances in which $G_{max} \subset C_j^*$ for some $C_j^* 
\in \C^*$. 
\end{proposition}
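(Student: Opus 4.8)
The plan is to prove this existence statement by exhibiting a single explicit \opg instance and verifying directly that its optimal cover encloses $G_{max}$. First I would invoke Lemma~\ref{l:no-partial-coverage} to restrict attention to covers in which each gap is either fully enclosed or fully excluded. Under this normalization a candidate solution is specified by the nonempty subset $T$ of gaps that are skipped: deleting the arcs in $T$ from $\partial R$ breaks the circle into exactly $|T|$ closed arcs, and the remaining task is to distribute the $n$ robots among these arcs so as to minimize the largest value of $len(A)/n_A$, where $n_A \ge 1$ is the integer number of robots assigned to arc $A$. Thus $\ell^*$ is the minimum, over all nonempty $T$ and all integer allocations, of this per-arc maximum, and ``$G_{max}\subset C_j^*$'' is exactly the statement that an optimizing $T$ need not contain $G_{max}$.

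The conceptual crux, and the reason the claim is not obvious, is that the phenomenon disappears in the fractional relaxation. If robots could be split continuously, the optimal per-arc maximum for a given $T$ is simply $len(\partial R)$ minus the total skipped length, divided by $n$, which is \emph{monotonically decreasing} as more gaps are added to $T$; hence skipping \emph{every} gap, and in particular skipping $G_{max}$, would always be preferred. Consequently any valid instance must make covering $G_{max}$ beneficial purely through the integrality of the robot counts: the length saved by skipping $G_{max}$ must be outweighed by the rounding waste incurred when the resulting arcs cannot be balanced with integer robots.

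Concretely, I would build an instance with three segments and gaps arranged cyclically as $S_1, G_1, S_2, G_2, S_3, G_3$, choosing the gap lengths so that $G_1 = G_{max}$ is the unique longest gap yet $len(G_1) < len(G_2) + len(G_3)$; the latter inequality is what makes it length-competitive to pay for covering $G_1$ while skipping the two shorter gaps. The segment lengths and the robot count $n$ are then tuned so that a cover-$G_{max}$ option (e.g.\ $T = \{G_2, G_3\}$) yields arcs whose lengths are in exact integer proportion to an allocation summing to $n$, so that this option attains its fractional bound with no rounding loss, while every option containing $G_1$ (namely $\{G_1\}$, $\{G_1,G_2\}$, $\{G_1,G_3\}$, and $\{G_1,G_2,G_3\}$) produces at least one arc whose length-to-robot ratio is forced strictly above that value for \emph{every} integer allocation. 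Since there are only $2^3-1=7$ subsets $T$, the argument concludes by a direct case check; it in fact suffices to show that the best $G_1$-skipping option is strictly worse than the best $G_1$-covering option.

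The main obstacle is precisely this tuning step, where the constraints pull against one another. Uniqueness of $G_{max}$ forces $len(G_1)$ above both $len(G_2)$ and $len(G_3)$; length-competitiveness forces $len(G_1)$ below their sum; and defeating $\{G_1,G_2\}$, $\{G_1,G_3\}$, and $\{G_1,G_2,G_3\}$, each of which covers strictly \emph{less} total length than the target and hence beats it fractionally, requires the arc lengths to have fractional parts whose ceilings overshoot the available robots, which in turn keeps $n$ small relative to the gap deficits. I would therefore fix a convenient target value (say $\ell^* = 1$), solve for integer arc/robot data realizing the cover-$G_{max}$ option exactly, and then verify that the chosen fractional offsets push all four $G_{max}$-skipping options strictly above $1$; a short search over small integer lengths and modest $n$ suffices to produce such a witness.
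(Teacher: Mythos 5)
Your proposal is correct in approach and, at bottom, is the same kind of proof as the paper's: both establish the existence claim by exhibiting one concrete instance and checking every alternative cover. The paper's witness (Fig.~\ref{fig:max-not-skipped}) has four gaps, three of equal length evenly spaced and $G_{max}$ only $1.5$ times longer, with $n=3$; it then argues by contradiction that skipping $G_3$ forces some robot to exceed $len(S_1)$. Your version reorganizes this around a cleaner and more systematic frame: after the normalization of Lemma~\ref{l:no-partial-coverage}, a candidate solution is a nonempty set $T$ of skipped gaps together with an integer allocation of robots over the $|T|$ resulting arcs, the value being $\max_A len(A)/n_A$; since the fractional relaxation is monotone in the total skipped length, the phenomenon can only arise from integrality, which correctly identifies why the claim is nontrivial and reduces the verification to a finite check over subsets $T$. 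This framing buys generality (it tells you how to search for witnesses, and it also explains the behavior seen in Fig.~\ref{fig:example}) at the cost of being less self-contained than the paper's hand-crafted symmetric example. The one real deficiency is that you stop short of exhibiting the witness: for an existence-by-example proof, the explicit instance and the case check \emph{are} the proof, and ``a short search suffices'' is not a substitute. Your constraint system is, however, satisfiable exactly as you describe. For instance, take the cyclic order $S_1,G_1,S_2,G_2,S_3,G_3$ with $len(S_1)=len(S_2)=1.1$, $len(S_3)=2$, $len(G_1)=0.8$, $len(G_2)=len(G_3)=0.5$, and $n=5$: skipping $T=\{G_2,G_3\}$ leaves arcs of lengths $3$ and $2$, which $3+2=5$ robots tile with value exactly $1$, while $T=\{G_1\}$ gives $5.2/5=1.04$, $T=\{G_2\}$ or $\{G_3\}$ gives $1.1$, $T=\{G_1,G_2\}$ and $\{G_1,G_3\}$ leave arcs of lengths $1.1$ and $3.6$ whose ceilings sum to $6>5$ (best value $1.1$), $T=\{G_1,G_2,G_3\}$ leaves arcs $1.1,1.1,2$ with ceiling sum $6>5$ (best value $1.1$), and $T=\varnothing$ gives $1.2$; so the unique optimum covers the unique longest gap $G_1$. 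With such a witness written down and these seven cases verified, your argument is complete and stands as a valid alternative to the paper's.
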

\begin{proof} The claim may be proved via contradiction with the example 
illustrated  in Fig.~\ref{fig:max-not-skipped} which readily generalizes. 
In the figure, there are four gaps $G_1, \ldots, G_4$, in which three 
gaps ($G_1$, $G_2$, and $G_4$) have the same length (i.e., $len(G_1)= 
len(G_2)= len(G_4)$) and are evenly spaced (i.e., $len(S_1)= len(S_2) 
= len(S_3\cup G_3\cup S_4)$). Here, $G_{max} = G_3$, which is 
$1.5$ times the length of other gaps, i.e., $len(G_3) = 
\frac{3}{2}len(G_1)$. 
\begin{figure}[ht]
\iffull \vspace*{2mm} \else \vspace*{0mm} \fi
\begin{center}
\begin{overpic}[width={\iffull 3in \else 2.6in \fi},tics=5]
{./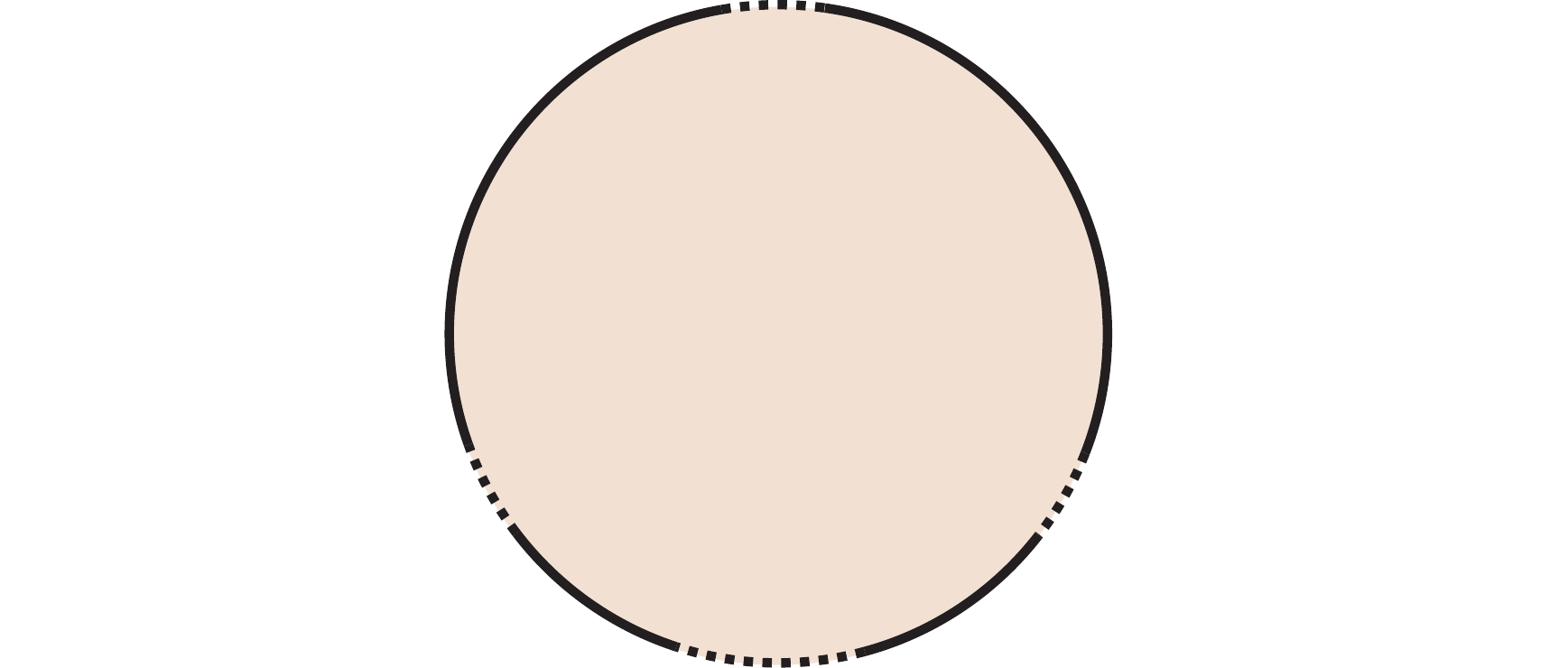}
\put(27,34){{\small $S_1$}}
\put(48,44){{\small $G_1$}}
\put(69,34){{\small $S_2$}}
\put(69,9){{\small $G_2$}}
\put(62,1){{\small $S_3$}}
\put(48,-4){{\small $G_3$}}
\put(32,2){{\small $S_4$}}
\put(26,8){{\small $G_4$}}
\end{overpic}
\end{center}
\iffull {} \else \vspace*{-3mm} \fi
\caption{\label{fig:max-not-skipped} A case where the perimeter has 
four segments or maximal connected components. Three of the gaps, 
$G_1$, $G_2$, and $G_4$ are of the same length and are evenly spaced, 
$G_3$ is $0.5$ times longer.}
\iffull {} \else \vspace*{-3mm} \fi
\end{figure}

For $n = 3$ robots, the optimal cover $\C^*$ must allocate each robot 
to guard each of $S_1$, $S_2$, and $(S_3\cup G_3\cup S_4)$. Without 
loss of generality, let $C_1^* = S_1$, $C_2^* = S_2$, and $C_3^* = 
(S_3\cup G_3\cup S_4)$. This means that $G_3$ is covered by $C_3^*$ 
and not skipped by $\C^*$. In this case, $len(C_1^*) =len(C_2^*) = 
len(C_3^*) = len (S_1)$.

To see that this must be the case, suppose on the contrary that $G_3$ 
is skipped and let $\C = \{C_1, C_2, C_3\}$ be an alternative cover. 
By Lemma~\ref{l:no-partial-coverage}, an optimal cover must skip $G_3$ 
entirely. In this case, some $C_j$, say $C_1$, must have its left 
endpoint\footnote{In this paper, for a non-circular segment or gap, its 
left endpoint is defined as the {\em limit point} along the 
counterclockwise direction along the perimeter and its right endpoint 
is defined as the limit point in the clockwise direction along the 
perimeter. So, in Fig.~\ref{fig:max-not-skipped}, for $S_1$, its left 
endpoint touches $G_4$ and its right endpoint touches $G_1$.} coincide 
with the right endpoint of of $G_3$ (the point where $G_3$ meets $S_4$). 
Then $C_1$ must cover $S_4$ and $G_4$; otherwise, $C_2$ and $C_3$ must 
cover $S_1 \cup S_2 \cup S_3$, which makes $len(C_2) + len(C_3) \ge 
len(S_1 \cup S_2 \cup S_3) > 2len(S_1)$ and $\C$ a worse cover than 
$\C^*$. By symmetry, similarly, some $C_j$, say $C_3$, must have its 
right endpoint coincide with the left endpoint of $G_3$ and cover 
$S_3$ and $G_2$. However, this means that both $G_2$ and $G_4$ are 
covered by $\C$. Even if $G_1$ is skipped, this makes $len(C_1 \cup 
C_2 \cup C_3) = len(S_4 \cup G_4 \cup S_1 \cup S_2 \cup G_2 \cup S_3) 
> len (S_1 \cup S_2 \cup S_3\cup G_3\cup S_4) = 3len(S_1)$, again making
$\C$ sub-optimal. By the pigeonhole principle, at least one of the $C_1$, 
$C_2$, or $C_3$ must be longer than $len(S_1)$. Therefore, skipping 
$G_{max} = G_3$ in this case leads to a sub-optimal cover. The  
optimal cover with $n = 3$ is to have $C^* = \{S_1, S_2, 
(S_3\cup G_3\cup S_4)\}$. 
\end{proof}

Proposition~\ref{p:max-no-exclusion} implies that in allocating robots 
to guard a perimeter $P \subset \partial R$, an algorithm cannot simply 
start by excluding the longest component from $\partial R 
\backslash P$ and then the next largest, and so on. This makes solving
\opg more challenging. Referring back to Fig.~\ref{fig:example}, if the 
top gap is skipped by the cover, then the three robots on the right side 
of the perimeter (two orange and one green) need to cover the part of the 
perimeter between the two hexagons. This will cause $\ell^*$ to increase. 

On the other hand, for an optimal cover $\C^* = \{C_1^*, \ldots, C_n^*\}$ 
of $P$, some $C_j^*\in \C^*$ must have at least one of its endpoint 
aligned with an endpoint of a component $S_k$ of $P$ (assuming that $P 
\subsetneq \partial R$). 

\begin{proposition}\label{p:endpoint-alignment}
For an optimal cover $\C^* = \{C_1^*,\ldots, C_n^*\}$ of a perimeter 
$P = S_1 \cup \ldots \cup S_q \subset \partial R = S_1 \cup G_1 
\cup $ $\ldots \cup S_q \cup G_q$, for some $S_i \subset P$ and 
$C_j^* \in C^*$, their right (or left) endpoints must coincide. 
\end{proposition}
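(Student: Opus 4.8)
The plan is to pin down a gap that the optimal cover skips entirely, and then observe that whichever piece guards the end of the segment adjacent to that gap is forced to terminate exactly at that segment's endpoint. First I would invoke Lemma~\ref{l:no-partial-coverage} to assume, without loss of generality, that the optimal cover $\C^*$ covers every gap either entirely or not at all. This step removes the one configuration that could defeat endpoint alignment, namely a piece that ends in the interior of a gap, and is what makes the rest of the argument clean.

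Next I would show that at least one gap is skipped completely. Since $P \subsetneq \partial R$ there is at least one gap, so $len(G_{max}) > 0$, and Proposition~\ref{p:single-bounds} gives $\ell^* \le \ell_{max} = (len(\partial R) - len(G_{max}))/n < len(\partial R)/n$. Because each of the $n$ pieces has length at most $\ell^*$ and any two pieces meet only at endpoints (a set of measure zero), the total covered length is at most $n\ell^* < len(\partial R)$. Hence $\bigcup_{C_j^* \in \C^*} C_j^* \neq \partial R$, so some point of $\partial R$ is uncovered; as all of $P$ is covered, this uncovered portion lies inside a gap, and by the no-partial-coverage assumption that entire gap, call it $G_k$, is skipped.

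Finally I would read off the alignment. Let $x$ be the right endpoint of $S_k$, which coincides with the left endpoint of $G_k$. Since $x \in S_k \subset P$, the point $x$ is covered, so $x \in C_j^*$ for some $j$. Every point immediately clockwise of $x$ lies in the skipped gap $G_k$ and is therefore uncovered, so $C_j^*$ cannot extend past $x$ in the clockwise direction; consequently $x$ is the right endpoint of $C_j^*$. Thus the right endpoints of $C_j^*$ and $S_k$ coincide, which is exactly the desired conclusion (the symmetric argument at a left endpoint handles the ``left'' alternative).

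The main obstacle is the partial-coverage subtlety flagged in the first step: without passing to a cover that skips gaps wholesale, a piece guarding the end of a segment could spill slightly into the adjacent gap and terminate strictly inside it, so that no piece endpoint would land on a segment boundary. Invoking Lemma~\ref{l:no-partial-coverage} is precisely what rules this out; the remaining counting step needs only the strict inequality $len(G_{max}) > 0$ guaranteed by $P \subsetneq \partial R$ together with the upper bound $\ell^* \le \ell_{max}$ from Proposition~\ref{p:single-bounds}.
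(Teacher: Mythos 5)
Your proof is correct and follows essentially the same route as the paper's: invoke Lemma~\ref{l:no-partial-coverage}, observe that some gap must be skipped entirely, and conclude that the piece covering the adjacent segment endpoint must terminate there. The only difference is that you explicitly justify the ``some gap is skipped'' step via the bound $\ell^* \le \ell_{max} < len(\partial R)/n$ from Proposition~\ref{p:single-bounds}, whereas the paper simply asserts it; your version is the more complete one.
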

\begin{proof}
By Lemma~\ref{l:no-partial-coverage}, for any $G_k \subset \partial R 
\backslash P$, and $C_j^* \in \C^*$, $G_k \cap C_j^* = G_k$ or 
$G_k \cap C_j^* = \varnothing$. Since at least one $G_k$, $1 \le k \le q$, 
must be skipped by $C_1^* \cup \ldots C_n^*$, some $C_j^*$, $1 \le j \le n$ 
must have its right endpoint aligned with the right endpoint of $S_k$, 
which is on the left of $G_k$. Following the same argument, some 
$C_{j'}^*$ and $S_{k'}$ must have the same left endpoints. 
\end{proof}

Proposition~\ref{p:endpoint-alignment} suggests that we may attempt to cover 
a perimeter $P$ starting from an endpoint of $S_1$, $S_2$, and so on. 
%An algorithm can attempt a binary search approach to first test whether 
%$\ell^* \le \frac{\ell_{min} + \ell_{max}}{2}$ or $\ell^* > \frac{\ell_{min} 
%+ \ell_{max}}{2}$. For each candidate of $\ell^*$, we need to try at most $q$ 
%left starting points for $C_1$. This yields an algorithm that allows the 
%approximation of $\ell^*$ arbitrarily well for a given $P$ and $n$.  
Indeed, as we will show in Section~\ref{section:algorithm}, an efficient 
algorithm can be designed exploiting this important fact. 
\jy{There is something interesting here: it seems that the algorithm not only
finds a solution satisfying~\eqref{eq:objective}, but also ensures a local 
optimal solution that minimizes the max coverage for all other robots that 
do not cover $\ell^*$. This may be interesting to discuss further.}

\subsection{Guarding Multiple Regions}
In a multiple region setup, there is one additional level of complexity:
the number of robots that will be assigned to an individual region is 
no longer fixed. This introduces another set of variables $n_1, \ldots, 
n_m$ with $n_1 + \ldots + n_m = n$, and $n_i$, $1 \le i \le m$ being the 
number of robots allocated to guard $\partial R_i$. For a fixed $n_i$, 
the results derived for a single region, i.e., 
Propositions~\ref{p:single-bounds}--\ref{p:endpoint-alignment} continue 
to hold.

\section{Efficient Algorithms for Perimeter Guarding}\label{section:algorithm}
\def\algoMRSimple{{\sc MultiRegionSingleComp}\xspace}
\def\algoSRG{{\sc SingleRegionMultiComp}\xspace}
\def\algoMRG{{\sc MultiRegionMultiComp}\xspace}
\def\isLFeasible{{\sc IsFeasible}}
\def\isLFeasibleByTilingPartial{{\sc IsTilingFeasiblePartial}}
\def\isLFeasibleByTilingFull{{\sc IsTilingFeasibleFull}}
In presenting algorithms for \opg, we begin with the case 
where each perimeter $P_i \in \P$ has a single connected component 
(i.e., $P_i$ is homeomorphic to $\mathbb S^1$ or $[0, 1]$). Then, 
we work on the general single region case where the only perimeter 
is composed of $q > 1$ connected components, before moving to the 
most general multiple regions case.

\subsection{Perimeters Containing Single Components}
When there is a single perimeter $P$, the solution is straightforward 
with $\ell^* = len(P)/n$. With $\ell^*$ determined, $\C^*$ is also readily computed. 

In the case where there are $m > 1$ regions, let the optimal 
distribution of the $n$ robots among the $m$ regions be given by 
$n_1^*, \ldots, n_m^*$. For a given region $R_i$, the $n_i^*$ robots 
must each guard a length $\ell_i = len(P_i)/n_i^*$. At this point, we
observe that for at least one region, say $R_i$, 
the corresponding $\ell_i$ must be maximal, i.e., $\ell_i = \ell^*$. 
The observation directly leads to a naive strategy for finding $\ell^*$: 
for each $R_i$, one may simply try all possible $1 \le n_i \le n$ and 
find the maximum $len(P_i)/n_i$ that is feasible, i.e., $n - n_i$ robots
can cover all other $R_{i'}$, $i' \ne i$, with each robot covering no 
more than $len(P_i)/n_i$. Denoting this candidate cover length 
$len(P_i)/n_i$ as $\ell_i^c$ and the corresponding $n_i$ as $n_i^c$, the 
smallest $\ell_i^c$ overall $1 \le i \le m$ is then $\ell^*$.

The basic strategy mentioned above works and runs in polynomial time. 
It is possible to carry out the computation much more efficiently if 
the longest $P_i$ is examined first. Without loss of generality, 
assume that $P_1$ is the longest perimeter, i.e., $len(P_1)
\ge len(P_{i})$ for all $1 \le i \le m$. Recall that $n_1^c$ is the 
number of robots allocated to $P_1$ that yields $\ell_1^c$, it must hold 
that 
\begin{align}\label{eq:l1}
\frac{len(P_1)}{n_1^c + 1} < \ell^* \le \frac{len(P_1)}{n_1^c} = \ell_1^c .
\end{align}
For an arbitrary $P_i$, simple manipulating of~\eqref{eq:l1} yields
\begin{align}\label{eq:li}
\frac{len(P_i)}{(n_1^c + 1)\frac{len(P_i)}{len(P_1)}} < \ell^* \le 
\frac{len(P_i)}{n_1^c\frac{len(P_i)}{len(P_1)}}.
\end{align}
This means that we only need to consider
$
n_i^c \in 
\big[\lceil n_1^c\frac{len(P_i)}{len(P_1)} \rceil, 
\lfloor (n_1^c + 1)\frac{len(P_i)}{len(P_1)}\rfloor].
$
Moreover, since $P_1$ is the longest perimeter, $\frac{len(P_i)}{len(P_1)} 
\le 1$. Therefore, the difference between the two denominators 
of~\eqref{eq:li} is no more than $1$, i.e., 
\[
(n_1^c + 1)\frac{len(P_i)}{len(P_1)} - n_1^c\frac{len(P_i)}{len(P_1)} \le 1. 
\]
When $len(P_i) \ne len(P_1)$, $(n_1^c + 1)\frac{len(P_i)}{len(P_1)} 
- n_1^c\frac{len(P_i)}{len(P_1)} < 1$ and there are two possibilities. One 
of these is 
$
\lceil n_1^c\frac{len(P_i)}{len(P_1)} \rceil =
\lfloor (n_1^c + 1)\frac{len(P_i)}{len(P_1)}\rfloor,
$
which leaves a single possible candidate for $n_i^c$. The other 
possibility 
is 
$
\lceil n_1^c\frac{len(P_i)}{len(P_1)} \rceil =
\lfloor (n_1^c + 1)\frac{len(P_i)}{len(P_1)}\rfloor + 1,
$
in which case there is actually no valid candidate for $n_i^c$. 
That is, after computing $n_1^c$ and $\ell_1^c$, if $len(P_i) = len(P_1)$ 
then no computation is needed for $P_i$. If $len(P_i) < len(P_1)$ then we 
only need to check at most one candidate for $n_i^c$. 

%The observation directly leads to an algorithm for computing $\ell^*$: 
%for each $P_i$, $1 \le i \le m$, we test all possible $1 \le n_i \le n 
%- m + 1$ (the rest of the $m -1$ regions need at least $m -1$ guards) 
%and for a fixed $n_i$ compute a candidate $\ell_i =  len(P_i)/n_i$. 
%This $\ell_i$ is assumed to be a potential $\ell^*$, which should be 
%the largest single robot coverage. Then, we check whether $n - n_i$ 
%robots can cover all perimeters except $P_i$ where each robot cannot 
%have coverage larger than $\ell_i$. For all feasible $1 \le n_i \le n$,
%the one that yields the smallest $\ell_i$ is kept. Denote this pair of 
%$n_i$ and $\ell_i$ as $n_i^{c}$ and $\ell_i^{c}$ where the superscript 
%means ``candidate''. Then, the smallest $\ell_i^{c}$ over all 
%$1 \le i \le m$ must be the desired $\ell^*$. 

Additional heuristics can be applied to reduce the required computation. 
First, in finding $n_1^c$, we may use bisection (binary search) over 
$[1, m]$ since if a given $n_1$ is infeasible, any $n_1' > n_1$ cannot 
be feasible either because $len(P_1)/n_1 < len(P_1)/n_1'$. Second, let 
$\ell = (\sum_{1\le i\le m}len(P_i))/n$, it holds that $\ell_i^c \ge 
\ell^* \ge \ell$. This means that for each $1 \le i \le m$, it is not 
necessary to try any $n_i > \lfloor \frac{len(P_i)}{\ell} \rfloor$. 
Third, if a candidate $\ell_i^c$ is at any time larger than the current 
candidate for $\ell^*$, that $i$ does not need to be checked further. 
We only use the first and the third in our implementation since the 
second does not help much once the bisection step is applied. The 
pseudo code is outlined in Algorithm~\ref{algo:MRS}. Note that we 
assume the problem instance is feasible ($n \ge m$), which is easy to check. 

It is straightforward to verify that Algorithm~\ref{algo:MRS} runs in 
time $O(m\log n + m^2)$. The $O(m\log n)$ comes from the $\mathbf{while}$ 
loop, which calls the function \isLFeasible($\ell_i^c$, $n_i^c$, $i$) 
$\log n$ times. The function checks whether the current $\ell_i^c$ is 
feasible for perimeters other than $P_i$ (note that it is assumed that 
\isLFeasible($\cdot$) has access to the input to Algorithm~\ref{algo:MRS} 
as well). This is done by computing for $i' \ne i$, $n_{i'} = \lceil 
len(P_{i'})/\ell_i^c \rceil$ and checking whether $\sum_{{i'} \ne i}n_{i'} 
\le n - n_i^c$. The $O(m^2)$ term comes from the $\mathbf{for}$ loop. 
The running time of Algorithm~\ref{algo:MRS} may be further reduced by 
noting that the $\mathbf{for}$ loop examines $(m-1)$ candidate 
$\ell_i^c$. These $\ell_i^c$ can be first computed and sorted, on which 
bisection can be applied. This drops the main running time to 
$O(m(\log n + \log m))$. This second bisection is not reflected in 
Algorithm~\ref{algo:MRS} to keep the logic and notation more 
straightforward. If we also consider input complexity, an additional 
$O(\sum_{1\le i \le m} M_i)$ is needed to compute $len(P_i)$ from the raw polygonal 
input and an additional $O(n)$ time is needed for generating the actual
locations for the $n$ robots. The total complexity is then $O(m(\log n + 
\log m) + \sum_{1\le i \le m} M_i + n)$.
%\sh{Algorithm style: 
%(1) Change displaystyle to remove small font.
%(2) Tighten up by using lIf instead of If.}
%\jy{We can do some of these if needed. I used this style for some 
%particular reason that I can tell you :).}
\iffull {} \else \vspace*{-3mm} \fi
\begin{algorithm}
\begin{small}
    \SetKwInOut{Input}{Input}
    \SetKwInOut{Output}{Output}
    \SetKwComment{Comment}{\%}{}
    \Input{$P_1, \ldots, P_m$: each $P_i$ a polygon or p-chain; 
		assume that $P_1$ is a longest perimeter \\
		$n$: the number of robots}
    \Output{$\ell^*, i^*$: the optimal coverage and the $i$ realizing it}
\vspace{0.025in}

$\ell^* \leftarrow \infty$; $i^* \leftarrow 1$;
\vspace{0.025in}

\Comment{\footnotesize Compute $n_1^c$ and initial $\ell^*$.}

$n_1^{min} \leftarrow 1$; $n_1^{max} \leftarrow n$; $n_1^c \leftarrow 1$;
\vspace{0.025in}

\While{$n_1^{min} \ne n_1^{max}$}{
\vspace{0.025in}
$n_1 \leftarrow \lceil \frac{n_1^{min} + n_1^{max}}{2} \rceil$;
$\ell_1 \leftarrow \frac{len(P_1)}{n_1}$;

\vspace{0.025in}
\uIf{\isLFeasible($\ell_1$, $n_1$, $1$)}{
	\vspace{0.025in}
	$\ell^* \leftarrow \ell_1$; 
	$n_1^c \leftarrow n_1$;
	$n_1^{min} \leftarrow n_1$; 
}
\Else{
	$n_1^{max} \leftarrow n_1 - 1$; 
}
}

\vspace{0.05in}
\Comment{\footnotesize Optimize $\ell^*$ over all $2 \le i \le m$.}

\For{$i \in \{2,\dots, m\}$ }{\label{algo:mrs:for}
	$n_i^- = \lceil \frac{n_1^clen(P_i)}{len(P_1)} \rceil$;
	$n_i^+ = \lfloor \frac{(n_1^c +1)len(P_i)}{len(P_1)} \rfloor$; 
	$\ell_i \leftarrow \frac{len(P_i)}{n_i^+}$;
				
		\If{$n_i^- == n_i^+$ and \isLFeasible($\ell_i$, $n_i^+$, $i$) and $\ell_i < \ell^*$}{
			\vspace{0.025in}
			$\ell^* \leftarrow \ell_i$; $i^* \leftarrow i$;
		}
}
\Return{$\ell^*$, $i^*$}
\caption{\algoMRSimple} \label{algo:MRS}
\end{small}
\end{algorithm}
\iffull {} \else \vspace*{-3mm} \fi

\subsection{Single Perimeter Containing Multiple Components}
\noindent\textbf{Additional structural analysis}. In computing $\ell^*$ for a 
single perimeter $P$ with multiple connected 
components, assume that $P$ is composed of $q$ maximal connected 
components $S_1, \ldots, S_q$ (e.g., Fig.~\ref{fig:single-perimeter}),
leaving $G_1, \ldots, G_q$ as the gaps on $\partial R$. Given an 
optimal cover $\C^* = \{C_1^*, \ldots, C_n^*\}$, by 
Proposition~\ref{p:endpoint-alignment}, we may assume that the left
endpoint of some $C_j^*$, $1 \le j \le n$ coincides with the left 
endpoint of some $S_k$, $1 \le k \le q$. We then look at the right 
endpoint of $C_j^*$. If it does not coincide with the right endpoint 
of some $S_{k'}$ ($k$ and $k'$ may or may not be the same), it must 
coincide with the left endpoint of $C_{j+1}^*$. Continuing like this, 
eventually we will hit some $C_{j'}^*$ where the right endpoint of 
$C_{j'}^*$ coincides with the right endpoint of some $S_{k'}$. 
Within a partitioned subset $C_j^*, \ldots, C_{j'}^*$, the maximal 
coverage of each robot is minimized when $len(C_j^*) = \ldots = 
len(C_{j'}^*)$. Because $\ell^* = len(C_j^*)$ for some $1 \le j \le n$, 
at least one of the subsets must have all robots cover exactly a 
length of $\ell^*$. These two key structural observations are 
summarized as follows. 

\iffull {} \else \vspace*{-1mm} \fi
\begin{theorem}\label{t:optimal-partition}
Let $\C^* = \{C_1^*, \ldots, C_n^*\}$ be a solution to an \opg instance
with a single perimeter $P = S_1\cup \ldots\cup S_q$ and gaps $G_1, \ldots, 
G_q$. Then, $\C^*$ may be partitioned into disjoint subsets with
the following properties
\begin{enumerate}
\item the union of the individual elements from any subset forms a 
continuous p-chain, 
\item the left endpoint of such a union coincides with the left 
endpoint of some $S_k$, $1 \le k \le q$, 
\item the right endpoint of such a union coincides with the right 
endpoint of some $S_{k'}$, $1 \le k' \le q$,  and
\item the respective unions of elements from any two subsets are 
disjoint, i.e., they are separated by at least one gap. 
\end{enumerate}
Moreover, for at least one such subset, $\{C_j^*, \ldots, C_{j'}^*\}$, 
it holds that $\ell^* = len(C_j^*) = \ldots = len(C_{j'}^*)$. 
\iffull {} \else \vspace*{-1mm} \fi
\end{theorem}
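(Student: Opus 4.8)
The plan is to normalize the given optimal cover so that the partition can simply be read off from the connected components of the covered region, and then to pin down the critical equal-length block by matching an upper and a lower bound on $\ell^*$. First I would invoke Lemma~\ref{l:no-partial-coverage} to assume, without loss of generality, that $\C^*$ covers every gap either entirely or not at all. Set $U = \bigcup_j C_j^*$. Using the upper bound $\ell^* \le \ell_{max} = (len(\partial R) - len(G_{max}))/n$ from Proposition~\ref{p:single-bounds}, the total covered length satisfies $\sum_j len(C_j^*) \le n\ell^* < len(\partial R)$, so $U \ne \partial R$ and at least one gap is skipped. Consequently $U$ decomposes into maximal connected p-chains $A_1, \ldots, A_t$ (with $t \ge 1$), pairwise disjoint and each flanked on both sides by a skipped gap.

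Next I would define the partition by placing $C_j^*$ into block $\ell$ if and only if $C_j^* \subset A_\ell$. This is well defined: each $C_j^*$ is connected and contained in $U$, so it lies in a single connected component $A_\ell$; moreover, no $C_j^*$ can straddle the skipped gap separating two blocks, since that gap is untouched by the normalized cover. The union of the pieces in block $\ell$ is then exactly $A_\ell$, which gives property~(1). Property~(4) is immediate because distinct $A_\ell$ are separated by a skipped gap. For properties~(2) and~(3), note that the endpoints of any skipped gap are endpoints of adjacent segments; hence the left endpoint of $A_\ell$ coincides with the left endpoint of some $S_k$ and its right endpoint with the right endpoint of some $S_{k'}$.

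For the ``moreover'' clause I would let $r_\ell$ denote the number of robots in block $\ell$ and $L_\ell = len(A_\ell)$, and introduce the comparison cover $\C'$ that keeps the same blocks and the same counts $r_\ell$ but spreads each block's robots evenly over $A_\ell$, so that every robot in block $\ell$ covers exactly $L_\ell/r_\ell$. Since $\C'$ still covers each $A_\ell$ (hence all of $P$), it is a feasible cover with maximum length $\max_\ell L_\ell/r_\ell$, so optimality of $\C^*$ forces $\ell^* \le \max_\ell L_\ell/r_\ell$. Conversely, averaging inside each block gives $\ell^* = \max_j len(C_j^*) \ge \max_\ell L_\ell/r_\ell$, so equality holds and $\ell^* = L_{\ell_0}/r_{\ell_0}$ for some block $\ell_0$. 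In that block every $len(C_j^*) \le \ell^*$ while the lengths sum to $L_{\ell_0} = r_{\ell_0}\,\ell^*$, which by a pigeonhole argument forces each of them to equal $\ell^*$, yielding the desired subset.

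The hard part will be the last paragraph. The routine component-bookkeeping of the first two paragraphs is essentially automatic once the no-partial-coverage normalization is in place, but the ``moreover'' claim requires care on two fronts: one must verify that the even-redistribution cover $\C'$ is genuinely feasible (it covers all of $P$ because it covers each $A_\ell$, even though it may now partially cover interior gaps and thus no longer satisfy Lemma~\ref{l:no-partial-coverage}, which is harmless for this comparison), and one must establish the two-sided bound $\ell^* = \max_\ell L_\ell/r_\ell$ before the pigeonhole step can be applied to the critical block $\ell_0$.
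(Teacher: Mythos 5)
Your proof is correct, and it reaches the same partition as the paper but by a different route. The paper builds each block constructively: it invokes Proposition~\ref{p:endpoint-alignment} to find a $C_j^*$ whose left endpoint is aligned with the left endpoint of some $S_k$, then walks clockwise through consecutive $C_{j+1}^*, C_{j+2}^*, \ldots$ until it reaches a $C_{j'}^*$ whose right endpoint is aligned with the right endpoint of some $S_{k'}$, and repeats. You instead normalize via Lemma~\ref{l:no-partial-coverage}, observe that the normalization forces every gap to be either wholly inside or wholly outside $U=\bigcup_j C_j^*$, and read the blocks off as the connected components of $U$; this gets all blocks and properties (1)--(4) in one step, and is arguably tighter than the paper's ``continuing like this'' walk, which only explicitly constructs one block. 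For the ``moreover'' clause the paper simply asserts that within a block the maximum is minimized by equalizing lengths and that therefore some block must sit entirely at $\ell^*$; your version makes this precise with the even-redistribution comparison cover $\C'$, the two-sided bound $\ell^*=\max_\ell L_\ell/r_\ell$, and the pigeonhole step in the critical block --- the same idea, but with the exchange argument actually carried out. The only caveat worth flagging is that both you and the paper really prove the statement for an optimal cover normalized per Lemma~\ref{l:no-partial-coverage} (the lemma only guarantees \emph{existence} of such an optimal cover, not that every optimal cover has the no-partial-coverage property); since the theorem is only used downstream to enumerate candidate values of $\ell^*$, this existential reading suffices, and your explicit ``without loss of generality'' matches the paper's implicit usage.
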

In the example from Fig.~\ref{fig:example}, $\C^*$ is partitioned into 
two subsets satisfying the conditions stated in 
Theorem~\ref{t:optimal-partition}. 

\noindent\textbf{A baseline algorithm}.
The theorem provides a way for 
computing $\ell^*$. For fixed $1 \le k, k' \le q$, denote the part of 
$\partial R$ between $S_k$ and $S_{k'}$ following a clockwise direction 
(with $S_k$ and $S_{k'}$ included) as $S_{k-k'}$.
Theorem~\ref{t:optimal-partition} says that for some $k, k'$, $len(S_{k-k'})
= n_{k-k'}^*\ell^*$ for some integer $n_{k-k'}^* \in [1, n]$. We may 
find $k, k'$, and $n_{k-k'}^*$, $\ell^*$ by exhaustively going through all 
possible $k, k'$, and $n_{k-k'}^c$ (as a candidate of $n_{k-k'}^*$). For 
each combination of $k, k'$ and $n_{k-k'}^c$, we can compute a 
\begin{align}\label{eq:lkkc}
\iffull {} \else \vspace*{-2mm} \fi
\ell_{k-k'}^c = \frac{len(S_{k-k'})}{n_{k-k'}^c}
\iffull {} \else \vspace*{-2mm} \fi
\end{align}
and check $\ell_{k-k'}^c$'s feasibility. The largest feasible 
$\ell_{k-k'}^c$ is $\ell^*$. 

\vspace{1mm}
\noindent\underline{Partial feasibility check}: 
For checking feasibility of a particular $\ell_{k-k'}^c$, i.e., whether 
$\ell_{k-k'}^c$ is long enough for the rest of the robots to cover the 
rest of the perimeter, we simply {\em tile} $(n - n_{k-k'}^c)$ copies 
$\ell_{k-k'}^c$ over the rest of the perimeter, starting from $S_{(k' 
\mod q) + 1}$. As an example, see Fig.~\ref{fig:tiling} where $n = 6$ 
robots are to cover the perimeter (in red, with five components $S_1, 
\ldots, S_5$). Suppose that the algorithm is currently working with 
$S_{1-2}$ (i.e., $k=1$ and $k' = 2$). If  $n_{1-2}^c = 2$, then 
$\ell_{1-2}^c = len(S_{1-2})/2$. Each of the five green line segments 
$C_1, \ldots, C_5$  in the figure has this length. As visualized in the 
figure, it is possible to cover $P\backslash S_{1-2}$ with three more 
robots, which is no more than $n - n_{1-2}^c = 4$. Therefore, this 
$\ell_{1-2}^c$ is feasible; note that it is not necessary to exhaust 
all $n = 6$ robots. In the figure, $C_3$ covers the entire $S_3$ and 
$G_3$, as well as part of $S_4$. The rest of $S_4$ is covered by $C_4$. 
As $C_4$ is tiled, it ends in the middle of $G_4$, so $C_5$ starts at 
the beginning of $S_5$. 
On the other hand, if $n_{1-2}^c = 3$, the resulting $\ell_{1-2}^c$ 
(each of the orange line segments has this length) is infeasible as 
$S_5$ is now left uncovered.
\begin{figure}[ht]
	\iffull {} \else \vspace*{-1mm} \fi
	\begin{center}
		\begin{overpic}[width={\ifoc 4in \else 3in \fi},tics=5]{./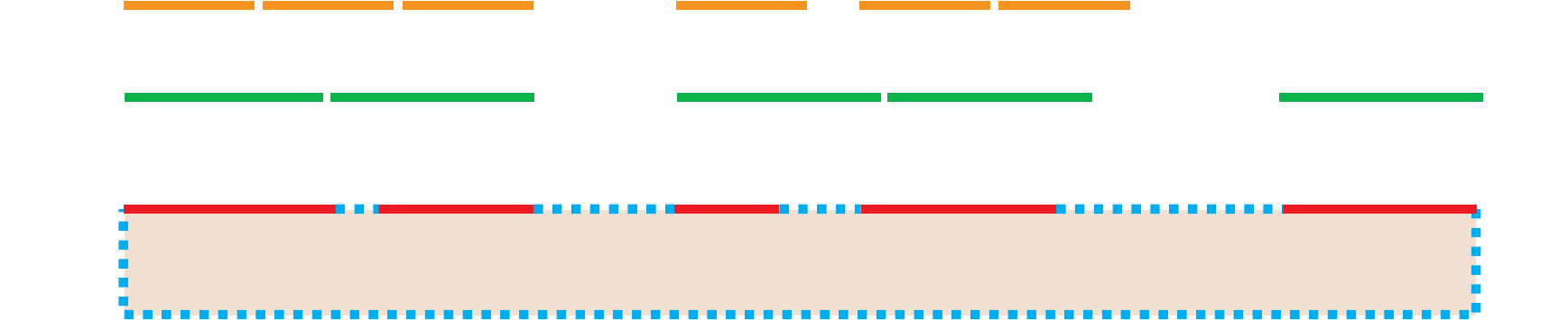}
			\put(13,2.7){{\small $S_1$}}
			\put(27,2.7){{\small $S_2$}}
			\put(45,2.7){{\small $S_3$}}
			\put(60,2.7){{\small $S_4$}}
			\put(86.5,2.7){{\small $S_5$}}
			\put(12.5,10){{\small $C_1$}}
			\put(25,10){{\small $C_2$}}
			\put(48,10){{\small $C_3$}}
			\put(61,10){{\small $C_4$}}
			\put(86.5,10){{\small $C_5$}}
			%\put(54,40){{\small \textcolor{green}{$P_2$}}}
			%\put(82,44){{\small $\W$}}
		\end{overpic}
	\end{center}
	\iffull {} \else \vspace*{-4.5mm} \fi
	\caption{\label{fig:tiling}  An illustration of the feasibility check of 
		$\ell_{1-2}^c$. The single rectangular region and the perimeter (five red 
		segments $S_1$--$S_5$) are shown at the bottom. The orange and green line 
		segments show two potential	covers.}
	\iffull {} \else \vspace*{-3mm} \fi
\end{figure}

The tiling-based feasibility check takes $O(q)$ time as there are at 
most $q$ segments to tile; it takes constant time to tile each 
using a given length. Let us denote this feasibility check 
\isLFeasibleByTilingPartial($k$, $k'$, $n_{k-k'}^c$), we have obtained 
an algorithm that runs in $O(nq^3)$ times since it needs to go through 
all $1 \le k \le q$, $1 \le k' \le q$, and $1 \le n_{k-k'}^c \le n$.
For each combination of $k, k'$, and $n_{k-k'}^c$, it makes a call to 
\isLFeasibleByTilingPartial($\cdot$). While a $O(nq^3)$ running time 
is not bad, we can do significantly better. 

\vspace{1mm}
\noindent\textbf{A much faster algorithm}.
In the baseline algorithm, for each $k-k'$ combination, up to $n$ 
candidate $n_{k-k'}^c$ may be attempted. To gain speedups, the first 
phase of the improved algorithm reduces the range of $\ell^*$ to limit 
the choice of $n_{k-k'}^c$. For the faster algorithm, a new feasibility 
checking routine is needed. 

\vspace{1mm}
\noindent\underline{Full feasibility check}: 
We introduce a feasibility check given only a length $\ell$. That is, a 
check is done to see whether $n$ robots are sufficient for covering $P$ 
without any covering more than length $\ell$. This feasibility check is 
performed in a way similar to \isLFeasibleByTilingPartial($\cdot$) but 
now $k$ and $k'$ are not specified. We instead try all $S_k$, $1 \le k 
\le q$ as the possible starting segment for the tiling. Let us denote 
this procedure \isLFeasibleByTilingFull($\ell$), which runs in $O(q^2)$.

\vspace{1mm}
\noindent\underline{Using bisection to limit the search range for 
$\ell^*$}: Starting from the initial bounds for $\ell^*$ given in 
Proposition~\ref{p:single-bounds} and with 
\isLFeasibleByTilingFull($\ell$), we can narrow the bound to be 
arbitrarily small, using bisection, since $\ell^*$ is the minimum 
feasible $\ell$. To do this, we start with $\ell$ as the middle 
point of initial lower bound $\ell_{min}$ and upper bound 
$\ell_{max}$, and run \isLFeasibleByTilingFull($\ell$). If $\ell$ 
is feasible, the upper bound is lowered to $\ell$. Otherwise, the 
lower bound is raised to $\ell$. In doing this, our goal in the 
first phase of the faster algorithm is to reduce the range for 
$\ell^*$ so that there is at most a single choice for $n_{k-k'}^c$,
regardless of the values of $k$ and $k'$. 
\iffull
The stopping criteria for the bisection is given as follows.
\else
The stopping criteria for the bisection is given as follows, the 
proof of which can be found in \cite{FenHanGaoYuRSS19EXT}.  
\fi

\iffull {} \else \vspace*{-1.5mm} \fi
\begin{proposition}\label{p:bisect-stop}
Assume that the bisection search stops with lower and upper bound 
being $\ell_{min}^f$ and $\ell_{max}^f$. If 
\begin{align}\label{eq:lmmd2}
\ell_{max}^f- \ell_{min}^f 
< \frac{\Big[\sum_{1\le k\le q}len(S_k)\Big]^2}{n^2len(\partial R)}, 
\end{align}
then there is at most a single choice for $n_{k-k'}^c$ for all $k, k'$.
\end{proposition}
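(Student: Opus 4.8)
The plan is to recast the claim as a question about how many integers can fall into a single preimage interval. For a fixed pair $k, k'$, the candidate coverage lengths are $\ell_{k-k'}^c = len(S_{k-k'})/n_{k-k'}^c$, which is a strictly decreasing function of the integer $n_{k-k'}^c$. After the bisection terminates we know $\ell^* \in (\ell_{min}^f, \ell_{max}^f]$, and by Theorem~\ref{t:optimal-partition} the value $\ell^*$ equals $\ell_{k-k'}^c$ for some combination; hence the only candidates worth examining are those $n_{k-k'}^c$ for which $\ell_{k-k'}^c$ lands inside $[\ell_{min}^f, \ell_{max}^f]$. Thus ``at most a single choice for $n_{k-k'}^c$'' is precisely the statement that at most one integer $n_{k-k'}^c$ produces a value $\ell_{k-k'}^c$ in this window.

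First I would translate the window constraint into an interval for the integer itself. The requirement $\ell_{min}^f \le len(S_{k-k'})/n_{k-k'}^c \le \ell_{max}^f$ is equivalent to
\[
n_{k-k'}^c \in \left[ \frac{len(S_{k-k'})}{\ell_{max}^f},\ \frac{len(S_{k-k'})}{\ell_{min}^f} \right].
\]
Since any two distinct integers differ by at least one, this closed interval contains at most one integer whenever its length is strictly below one, i.e. whenever
\[
\frac{len(S_{k-k'})\,(\ell_{max}^f - \ell_{min}^f)}{\ell_{min}^f\,\ell_{max}^f} < 1.
\]

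Next I would bound the left-hand side uniformly in $k, k'$. Two ingredients suffice. Since $S_{k-k'}$ is a portion of the boundary, $len(S_{k-k'}) \le len(\partial R)$. And because the bisection is initialized at the bounds of Proposition~\ref{p:single-bounds} and only ever raises its lower endpoint or lowers its upper endpoint, the invariant $\ell_{min} \le \ell_{min}^f \le \ell_{max}^f$ holds, so both $\ell_{min}^f$ and $\ell_{max}^f$ are at least $\ell_{min} = (\sum_{1\le k\le q} len(S_k))/n$, giving $\ell_{min}^f\,\ell_{max}^f \ge [\sum_{1\le k\le q} len(S_k)]^2/n^2$. Substituting these into the interval length yields
\[
\frac{len(S_{k-k'})\,(\ell_{max}^f - \ell_{min}^f)}{\ell_{min}^f\,\ell_{max}^f} \le \frac{n^2\,len(\partial R)\,(\ell_{max}^f - \ell_{min}^f)}{\big[\sum_{1\le k\le q} len(S_k)\big]^2},
\]
which is strictly less than one exactly under the hypothesis~\eqref{eq:lmmd2}. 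As $k$ and $k'$ were arbitrary, a single choice of $n_{k-k'}^c$ suffices for every pair.

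The main obstacle here is bookkeeping rather than any deep argument: one must confirm the bisection invariant $\ell_{min}^f, \ell_{max}^f \ge \ell_{min}$ so that the product $\ell_{min}^f\,\ell_{max}^f$ admits a lower bound, and one must use the crude but uniform estimate $len(S_{k-k'}) \le len(\partial R)$ to make the bound independent of $k$ and $k'$. The conceptual step worth emphasizing is the reduction to counting integers in the preimage interval; once that framing is in place, the remaining inequalities are routine and the exact form of the threshold in~\eqref{eq:lmmd2} falls out directly.
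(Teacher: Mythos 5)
Your proof is correct and follows essentially the same route as the paper's: translate the window $[\ell_{min}^f,\ell_{max}^f]$ into an interval of admissible integers $n_{k-k'}^c$, observe that this interval holds at most one integer when its length is below one, and then bound that length uniformly using $len(S_{k-k'}) \le len(\partial R)$ and $\ell_{min}^f, \ell_{max}^f \ge \ell_{min}$. The only difference is presentational (you make the integer-counting framing explicit up front), not mathematical.
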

\iffull
\begin{proof}
Assume that when the bisection ends, the lower and upper bounds are 
$\ell_{min}^f$ and $\ell_{max}^f$. Then, for some $\ell_{k-k'}^c$
(see~\eqref{eq:lkkc}), it holds that
\[
\ell_{min}^f \le \ell_{k-k'}^c = \frac{len(S_{k-k'})}{n_{k-k'}^c} \le \ell_{max}^f.
\]
Rearranging the above yields
\[
\frac{len(S_{k-k'})}{\ell_{max}^f} \le n_{k-k'}^c \le \frac{len(S_{k-k'})}{\ell_{min}^f}.
\]
To reduce the number of possible $n_{k-k'}^c$ to at most $1$, we want 
\[
\frac{len(S_{k-k'})}{\ell_{min}^f} - \frac{len(S_{k-k'})}{\ell_{max}^f} < 1, 
\]
which is the same as requiring 
\begin{align}\label{eq:lmmd}
\ell_{max}^f- \ell_{min}^f <
\frac{\ell_{max}^f\ell_{min}^f}{len(S_{k-k'})}. 
\end{align}
Noting that $\ell_{min}^f, \ell_{max}^f \ge  \ell_{min}$ and $len(S_{k-k'}) < 
len (\partial R)$, Equation~\eqref{eq:lmmd} is ensured by 
\begin{align}
\ell_{max}^f- \ell_{min}^f 
< \frac{\ell_{min}^2}{len(\partial R)} 
= \frac{\Big[\sum_{1\le k\le q}len(S_k)\Big]^2}{n^2len(\partial R)} 
\end{align}
\end{proof}
\else
\vspace*{-1.5mm}
\fi

\noindent\underline{Finding $\ell^*$}: 
Equation~\eqref{eq:lmmd2} gives the stopping criteria used for refining 
the bounds for $\ell^*$. After completing the first phase, the algorithm
moves to the second phase of actually pinning down $\ell^*$. In this phase, 
instead of checking $\ell_{k-k'}^c$ one by one, we collect $\ell_{k-k'}^c$ for
all possible combinations of $k, k'$. Because the first phase already ensures
for each $k, k'$ combination there is at most one pair of $n_{k-k'}^c$ and 
$\ell_{k-k'}^c$, there are 
at most $q^2$ total candidates. After all candidates are collected, they are 
sorted and another bisection is performed over these sorted candidates. 
Feasibility check is done using \isLFeasibleByTilingPartial($\cdot$). The 
complete algorithm is given in Algorithm~\ref{algo:SRG}. Note that 
$\ell^{min}$ and $\ell^{max}$, which change as the algorithm runs, are not 
the same as the fixed $\ell_{min}$ and $\ell_{max}$ from 
Proposition~\ref{p:single-bounds}. 

In terms of running time, the first $\mathbf{while}$ loop starts with 
$\ell^{max} - \ell^{min} = \frac{len(\partial R) -  len(G_{max})}{n} -
\frac{\sum_{1\le k\le q}len(S_k)}{n} \le \frac{len(\partial R)}{n}$ 
and 
stops when $\ell^{max} - \ell^{min} \le 
\frac{[\sum_{1\le k\le q}len(S_k)]^2}{n^2len(\partial R)}$. Therefore, 
the bisection is executed 
$\log \frac{n[len(\partial R)]^2}{[\sum_{1\le k\le q}len(S_k)]^2}$ times, 
which by
the assumption that $len(\partial R)$ is a polynomial factor over 
$\sum_{1\le k\le q}len(S_k)$, is $O(\log (n + q))$. Since each feasibility 
check takes $O(q^2)$ time, the first $\mathbf{while}$ loop takes 
$O(q^2\log(n + q))$ time. The $\mathbf{for}$ loops work with a total of 
$O(q^2)$ candidates and must sort them, taking time $O(q^2 \log q^2) = 
O(q^2 \log q)$. Then, the second $\mathbf{while}$ loop bisects $O(q^2)$ 
candidates and calls \isLFeasibleByTilingPartial($\cdot$) for each check, 
taking time $O(q\log q^2) = O(q\log q)$. The total running time of 
Algorithm~\ref{algo:SRG} is then $O(q^2\log (n + q) + M + n)$.\footnote{We note that 
the assumption that $len(\partial R)$ is a polynomial factor over 
$\sum_{1\le k\le q}len(S_k)$ is not necessary. However, the corresponding 
analysis becomes much more involved without it. Since the assumption makes 
practical sense and also due to space limit, the more general result is 
omitted from the current paper. Many additional interesting but 
non-essential details, including this one, will be included in an extended 
version.}

\begin{algorithm}
	\begin{small}
		\SetKwInOut{Input}{Input}
		\SetKwInOut{Output}{Output}
		\SetKwComment{Comment}{\%}{}
		\Input{$\partial R = S_1 \cup G_1 \cup \ldots \cup S_q \cup G_q$: a 
			single boundary with the perimeter $P = S_1 \cup \ldots \cup S_q$.\\
			$n$: the number of robots}
		\Output{$\ell^*, k^*, k'^{*}$: the optimal coverage and the 
			pair of $k$ and $k'$ that realize the optimal coverage}
		\vspace{0.025in}
		
		\Comment{\footnotesize Phase one: narrow the range of $\ell^*$.}
		$\ell^{min} \leftarrow \frac{\sum_{1\le k\le q}len(S_k)}{n}$, 
		$\ell^{max} \leftarrow  \frac{len(\partial R) -  len(G_{max})}{n}$;
		\vspace{0.025in}
		
		\While{$\ell^{max}-\ell^{min} > \frac{[\sum_{1\le k\le q}len(S_k)]^2}{n^2len(\partial R)}$}{
			$\ell \leftarrow \frac{\ell^{max}+\ell^{min}}{2}$;

			\vspace{0.025in}
			(\,\isLFeasibleByTilingFull($\ell$)? $\ell^{max} \leftarrow \ell$ : 
			$\ell^{min} \leftarrow \ell$\,);

			%\uIf{\isLFeasibleByTilingFull($\ell$)}{$\ell^{max} \leftarrow \ell$;}
			%\Else{$\ell^{min} \leftarrow \ell$;}
		}
		
		\vspace{0.025in}
		\Comment{\footnotesize Phase two: pin down $\ell^*$.}
		$sm \leftarrow []$; \Comment{$sm$ is a sorted map.}

		\For{$k, k' \in \{1,\dots, q\}$ }{\label{algo:srg:for}
			\vspace{0.025in}
			
			$n_{k-k'}^{max} \leftarrow \lfloor \frac{len(S_{k-k'})}{\ell^{min}} \rfloor$;
			$n_{k-k'}^{min} \leftarrow \lceil \frac{len(S_{k-k'})}{\ell^{max}} \rceil$;
			
			\vspace{0.025in}
			
			\For{$n_{k-k'}^c \in \{n_{k-k'}^{min}, \ldots, n_{k-k'}^{max}\}$}{\label{algo:srg:for2}
				$sm$.put($\frac{len(S_{k-k'})}{n_{k-k'}^c}$, $(n_{k-k'}^c, \frac{len(S_{k-k'})}{n_{k-k'}^c}, k, k')$);
			}
		}
	
		\vspace{0.025in}
		$\ell^* \leftarrow \infty$; $k^* \leftarrow 0$; $k'^* \leftarrow 0$;
		
		\vspace{0.025in}
		\While{$sm.\mathrm{size()}$ $> 1$}{
			\Comment{\footnotesize Extract the element from sm in the middle.}
			$(n^c, \ell^c, k, k') \leftarrow sm$.middleValue(); 
	
			\vspace{0.025in}
			\uIf{\isLFeasibleByTilingPartial($k, k', n^c$)}{
				$\ell^* \leftarrow \ell^c$; $k^* \leftarrow k$; $k'^* \leftarrow k'$;
				
				$sm \leftarrow sm.$range($sm.$minKey(), $\ell^c$);
			}
			\Else{
				$sm \leftarrow sm.$removeRange($sm.$minKey(), $\ell^c$);
			}
		}		
		
		\Return{$\ell^*$, $k^*$, $k'^*$}
		\caption{\algoSRG} \label{algo:SRG}
	\end{small}
\end{algorithm}

\vspace*{-2mm}
\subsection{Multiple Perimeters Containing Multiple Components}
\vspace*{-1mm}
The algorithm for the multiple perimeter case is a direct 
generalization Algorithm~\ref{algo:SRG}. To facilitate the description, 
let the perimeter $P_i$, $1 \le i \le m$, contain $q_i$ maximal connected 
components, i.e., $P_i = S_{i,1} \cup\ldots \cup S_{i,q_i}$ and the 
boundary $\partial R_i = S_{i,1} \cup G_{i,1} \cup \ldots \cup S_{i,q_i} 
\cup G_{i,q_i}$. We extend the definition of $S_{k-k'}$ for a single 
perimeter to $S_{i,k-k'}$ for multiple perimeters. By a straightforward 
generalization of Theorem~\ref{t:optimal-partition} to multiple perimeters,
for an \opg instance, the length of some $S_{i,k-k'}$ must be an integer 
multiple of $\ell^*$. Similar to the single perimeter case, we can try 
all $S_{i,k-k'}$ and for each try all possible $1 \le n_{i,k-k'}^c \le n$. 
This gives us $\ell_{i,k-k'}^c = \frac{len(S_{i,k-k'})}{n_{i,k-k'}^c}$ as 
candidates for $\ell^*$; there are $n(\sum_{1\le i \le m} q_i^2)$ such 
candidates. For checking the feasibility of $\ell_{i,k-k'}^c$, we may use 
\isLFeasibleByTilingPartial($\cdot$) for the rest of $P_i$ (taking $O(q_i)$ 
time) and \isLFeasibleByTilingFull($\cdot$) for all $1 \le i' \le m$ and $i' 
\ne i$ (taking $O(\sum_{1 \le i' \le m, i' \ne i} q_{i'}^2)$ time). 
This yields a baseline algorithm that runs in 
$O(n(\sum_{1\le i \le m} q_i^2)^2)$ time. 

From here, speedups can be obtained as in the single perimeter case using 
the same reasoning. This yields a two-phase algorithm, which we call 
\algoMRG, that runs in $O((\sum_{1\le i \le m} q_i^2) \log(n + 
\sum_{1\le i \le m} q_i) + \sum_{1\le i \le m} M_i + n)$.

\section{Performance Evaluation and Applications}\label{section:evaluation}
Our evaluation first verifies the algorithms' running time matches 
the claimed bounds. Then, two practical scenarios are illustrated 
to show how \opg may be adapted to applications. 

\vspace*{-1mm}
\subsection{Algorithm Performance}
\vspace*{-1mm}
In the performance results presented here, a data point is the 
average from $10$ randomly generated \opg instances. All algorithms 
are implemented in Python 2.7, and all experiments are executed on 
an Intel\textsuperscript{\textregistered} Xeon\textsuperscript{\textregistered} 
%E5-1660 v3 
CPU at 3.0GHz. % with 32GB RAM.% at 2133MHz.

%\subsubsection{\algoMRSimple} 
For the case of $m$ perimeters each containing a single segment, for 
each $1 \le i \le m$, we set $len(\partial R_i) = 1$ and let $len(P_i)$ 
be uniformly distributed in $(0, 1]$. Fig.~\ref{fig:mpsc-example} shows 
the result for an example with $m = 10$ and $n = 30$. For various values 
of $m, n$, the running time of \algoMRSimple is summarized in 
Table~\ref{eval:mpsc}, which scales very well with $m$ and $n$ (note that 
the $n \le m$ case does not make much sense here). 

\begin{figure}[ht!]
    \iffull {} \else \vspace*{-4mm} \fi
    \centering
		\iffull
		\includegraphics[keepaspectratio, scale=0.4]{./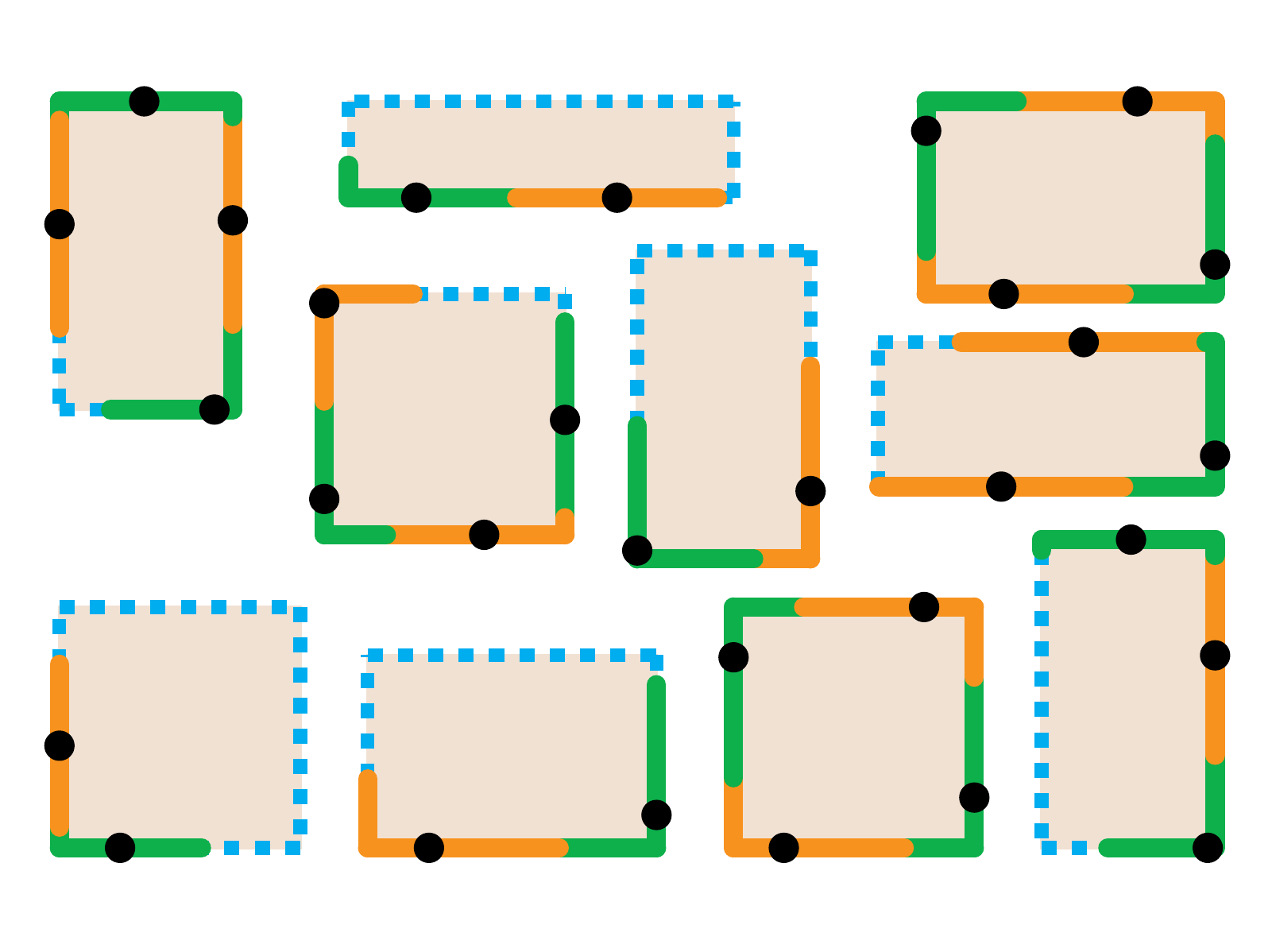}
		\else
    \includegraphics[keepaspectratio, scale=0.32]{./figures/mpsc-example-eps-converted-to.pdf}
		\fi
    \iffull {} \else \vspace*{-6mm} \fi
    \caption{\label{fig:mpsc-example} 
    An example problem instance when $m = 10$ and $n = 30$. The black dots
		indicate deployed robot locations; the green and orange p-chains indicate
		the coverage.
    %We use blue dashed lines to show the gaps which do not need to be covered. 
    %The black filled circles illustrate final robot deployment locations, and 
    %the orange/green lines demonstrate individual robot covers. 
    %In this example, we use a different distribution to sample $len(P_i)$ due to 
    %cosmetic reasons.
		}
    \iffull {} \else \vspace*{-2mm} \fi
\end{figure}

\iffull
\begin{table}[ht!]
    \centering
    \begin{footnotesize}
    \begin{tabular}{|c|c|c|c|c|c|c|} 
        \hline
        \diagbox{$m$}{$n$}       & $10^8  $ & $10^9   $ & $10^{10}$ & $10^{11}$ & $10^{12}  $ \\ \hline
        \rule{0pt}{2.5ex} $10^3$ & $0.001 $ & $0.001  $ & $0.001  $ & $0.001  $ & $0.001    $ \\ \hline
        \rule{0pt}{2.5ex} $10^4$ & $0.006 $ & $0.007  $ & $0.008  $ & $0.008  $ & $0.008    $ \\ \hline
        \rule{0pt}{2.5ex} $10^5$ & $0.075 $ & $0.088  $ & $0.102  $ & $0.107  $ & $0.106    $ \\ \hline
        \rule{0pt}{2.5ex} $10^6$ & $1.152 $ & $1.442  $ & $1.508  $ & $1.652  $ & $1.617    $ \\ \hline
        \rule{0pt}{2.5ex} $10^7$ & $13.963$ & $17.281 $ & $18.796 $ & $20.354 $ & $20.627   $ \\ \hline
        \rule{0pt}{2.5ex} $10^8$ & NA       & $176.115$ & $223.186$ & $227.250$ & $230.000  $ \\ \hline
    \end{tabular}
		\end{footnotesize}
    \caption{\label{eval:mpsc} \algoMRSimple~running time (seconds)}
\end{table}

To empirically verify the asymptotic running time upper bounds of 
\algoMRSimple, we plot the running time over $m$ for a fixed value of 
$n =10^{12}$. From the result (Fig.~\ref{fig:mpsc:mfixn}) it may be 
observed that the asymptotic running time appears to be tight. We point
out that the $O(\sum_{1\le i \le m} M_i + n)$ part of the overall 
running time $O(m(\log n + \log m) + \sum_{1\le i \le m} M_i + n)$ turns 
out to be rather insignificant (at least up to $m = 10^8$ and $n = 10^{12}$) 
and is subsequently ignored. The same applies to other algorithms as 
well. 
\begin{figure}[ht!]
    \vspace*{-2mm}
    \centering
    \includegraphics[keepaspectratio, scale=0.85]{./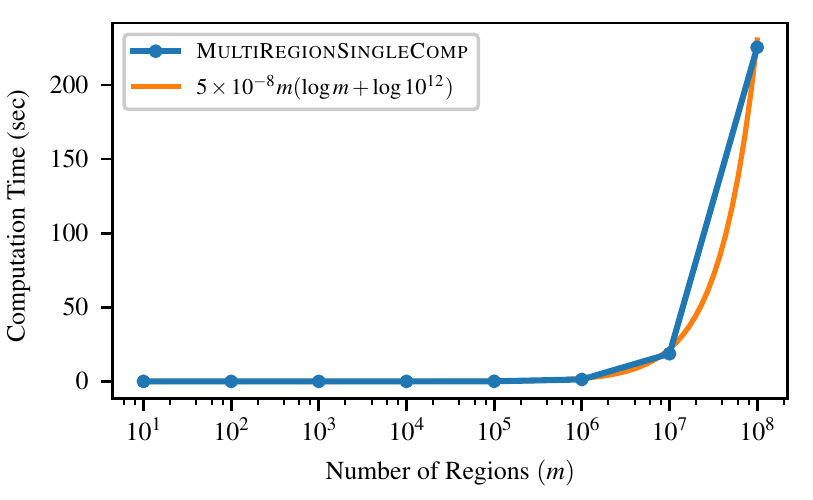}
    \vspace*{-4mm}
    \caption{\label{fig:mpsc:mfixn}Running time: \algoMRSimple 
		v.s. $O(m (\log m + \log 10^{12}))$.}
    \vspace*{-3mm}
\end{figure}

A similar study of checking the running time dependency over $n$ was 
carried out as well but did not show a tight dependency of the running 
time over $\log n$. This is because the $O(m\log n)$ part (from the 
$\mathbf{while}$ loop in \algoMRSimple) is dominated by the 
$O(m\log m)$ part (from the enhanced $\mathbf{while}$ loop with bisection). 
\else
\begin{table}[ht!]
    \centering
		\vspace*{-4mm}
    \begin{footnotesize}
    \begin{tabular}{|c|c|c|c|c|c|c|} 
        \hline
        \diagbox{$m$}{$n$}       & $10^8  $ & $10^9   $ & $10^{10}$ & $10^{11}$ & $10^{12}  $ \\ \hline
        %\rule{0pt}{2.5ex} $10^3$ & $0.001 $ & $0.001  $ & $0.001  $ & $0.001  $ & $0.001    $ \\ \hline
        %\rule{0pt}{2.5ex} $10^4$ & $0.006 $ & $0.007  $ & $0.008  $ & $0.008  $ & $0.008    $ \\ \hline
        %\rule{0pt}{2.5ex} $10^5$ & $0.075 $ & $0.088  $ & $0.102  $ & $0.107  $ & $0.106    $ \\ \hline
        \rule{0pt}{2.5ex} $10^6$ & $1.152 $ & $1.442  $ & $1.508  $ & $1.652  $ & $1.617    $ \\ \hline
        \rule{0pt}{2.5ex} $10^7$ & $13.963$ & $17.281 $ & $18.796 $ & $20.354 $ & $20.627   $ \\ \hline
        \rule{0pt}{2.5ex} $10^8$ & NA       & $176.115$ & $223.186$ & $227.250$ & $230.000  $ \\ \hline
    \end{tabular}
		\end{footnotesize}
		\vspace*{-3mm}
    \caption{\label{eval:mpsc} \algoMRSimple~running time (seconds)}
		\vspace*{-4mm}
\end{table}
\fi

%Recall that when the problem has multiple perimeters with single components, 
%each perimeter has exactly one segment and at most one gap. Our problem 
%generation procedure works as follows: given the number of perimeters $m$, we 
%first generate $m$ rectangles $\{R_1, \dots, R_m\}$ with $len(\partial R_i) = 1$ 
%for all $1 \leq i \leq m$, and then select a closed connected component $P_i$ 
%for each $R_i$. Here, $len(P_i)$ is uniformly randomly sampled from $(0, 1]$. 
%An example problem instance along with its optimal cover is shown in 
%Fig.~\ref{fig:mpsc-example}. The computation time of \algoMRSimple~under 
%different $m$ and $n$ is presented in Table.~\ref{eval:mpsc}. 
%\sh{Plots here. The constant factor in the $O(m(\log n + \log m) + \sum_{i} M_i)$ time 
%complexity is around $5 \times 10^{-8}$ seconds. }

For the case of a single perimeter with multiple components, a random 
polygon is generated on which $2q$ points are randomly sampled that 
yield $q$ segments (that form the perimeter) and $q$ gaps. 
\iffull
Some example instances and the optimal solutions are illustrated in 
Fig.~\ref{fig:more-spmc-ex}.
\else
An example instance and the optimal solution with $q=3$ and $n = 10$ is 
illustrated in Fig.~\ref{fig:spmc-example}. 
\fi
The computation time for various $q$ and 
$n$ combinations is given in Table~\ref{eval:spmc}.
\iffull
\begin{figure}[ht!]
    % \vspace*{-3mm}
    \centering
		\includegraphics[keepaspectratio, scale=0.4]{./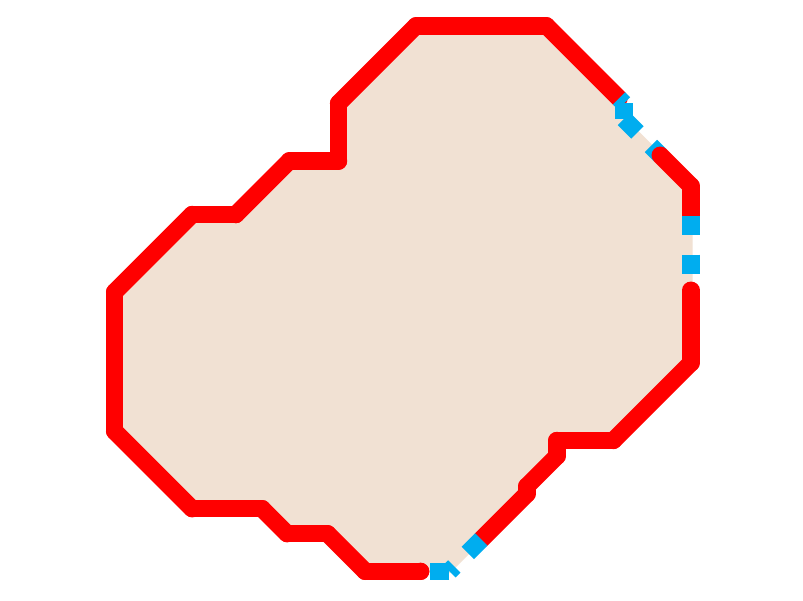}
    \includegraphics[keepaspectratio, scale=0.4]{./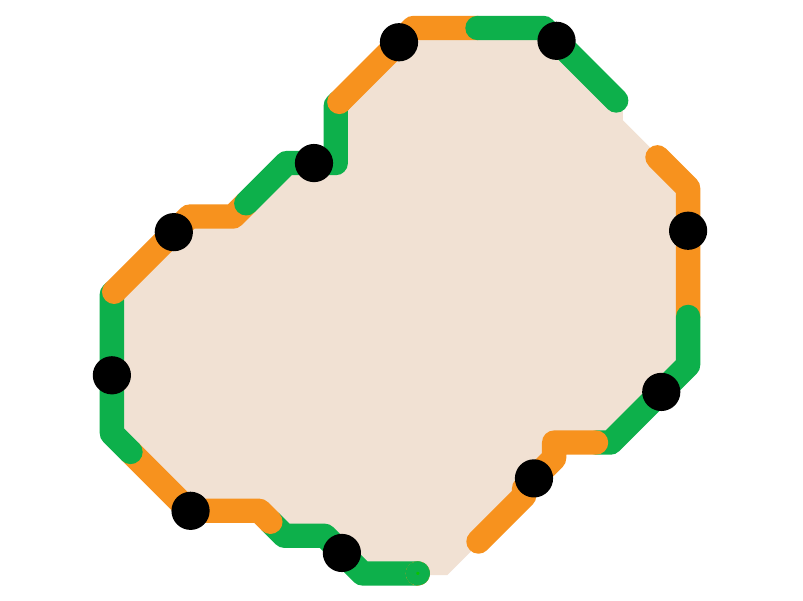} 
    \vspace*{2mm} \\
    \includegraphics[keepaspectratio, scale=0.4]{./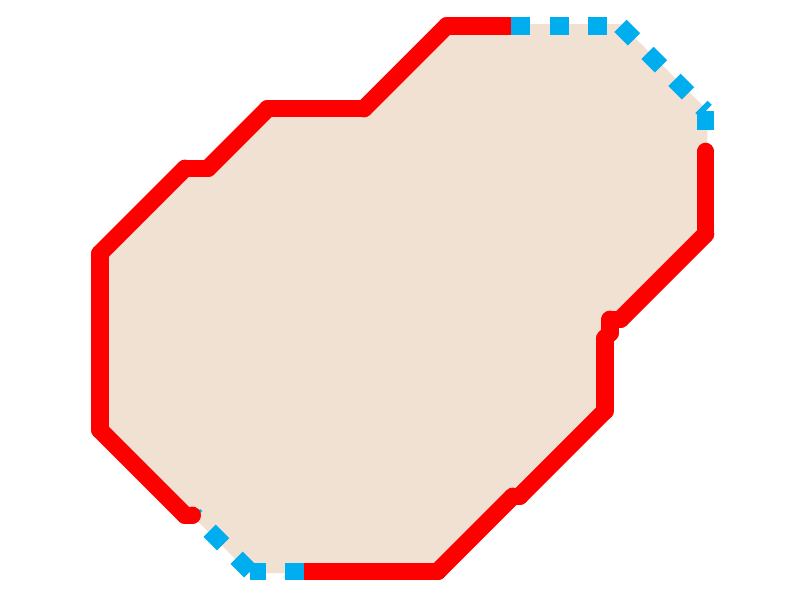}
    \includegraphics[keepaspectratio, scale=0.4]{./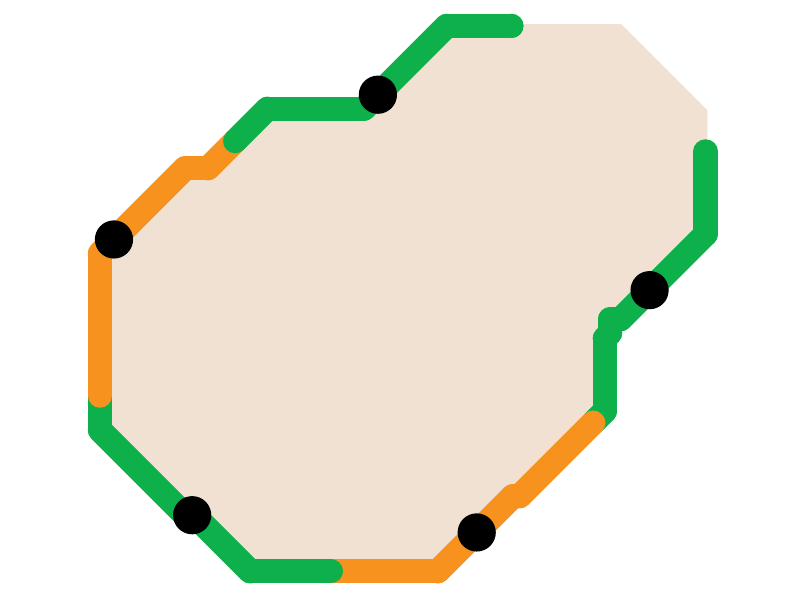}
    \vspace*{2mm} \\
    \includegraphics[keepaspectratio, scale=0.4]{./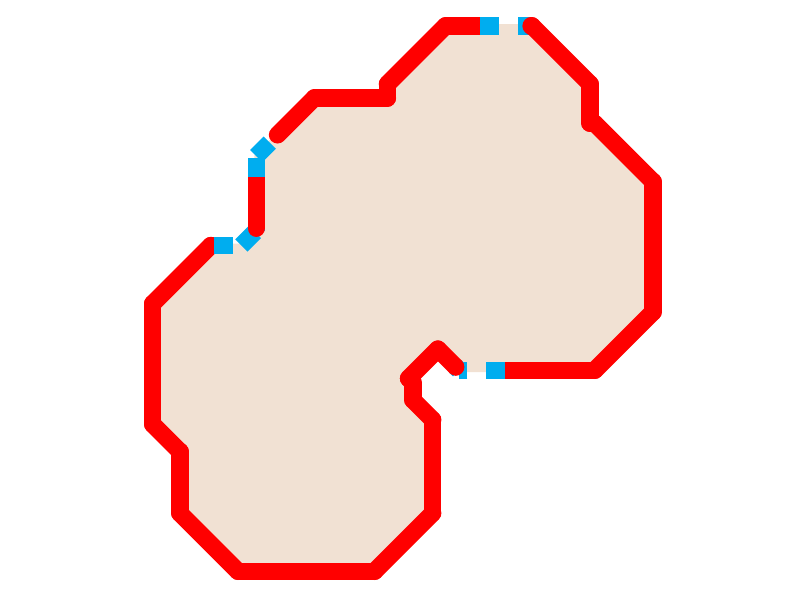}
    \includegraphics[keepaspectratio, scale=0.4]{./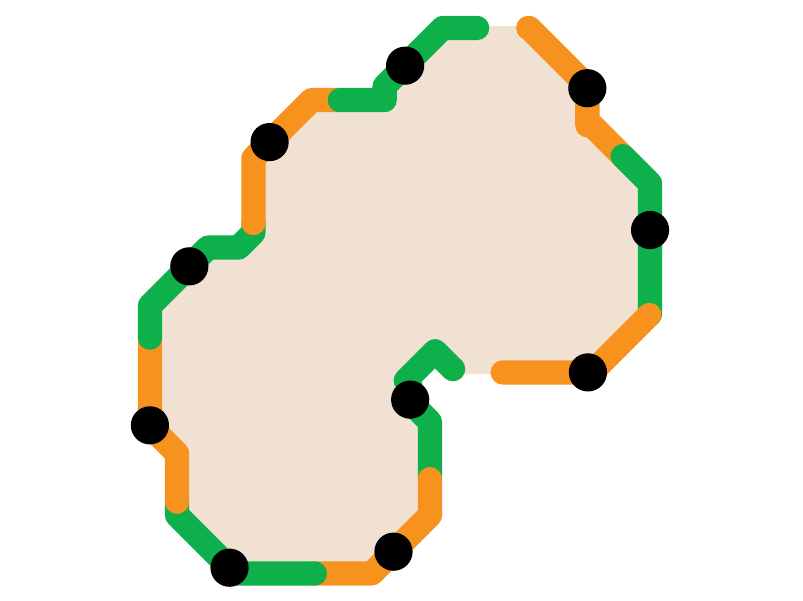} \\
    % \vspace{5mm}
    \vspace*{2mm}
    \includegraphics[keepaspectratio, scale=0.4]{./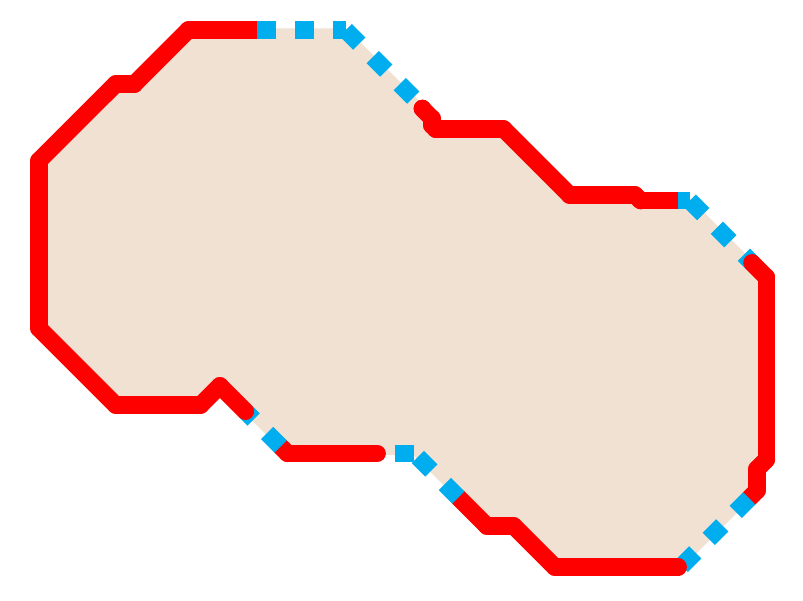}
    \hspace{5mm}
    \includegraphics[keepaspectratio, scale=0.4]{./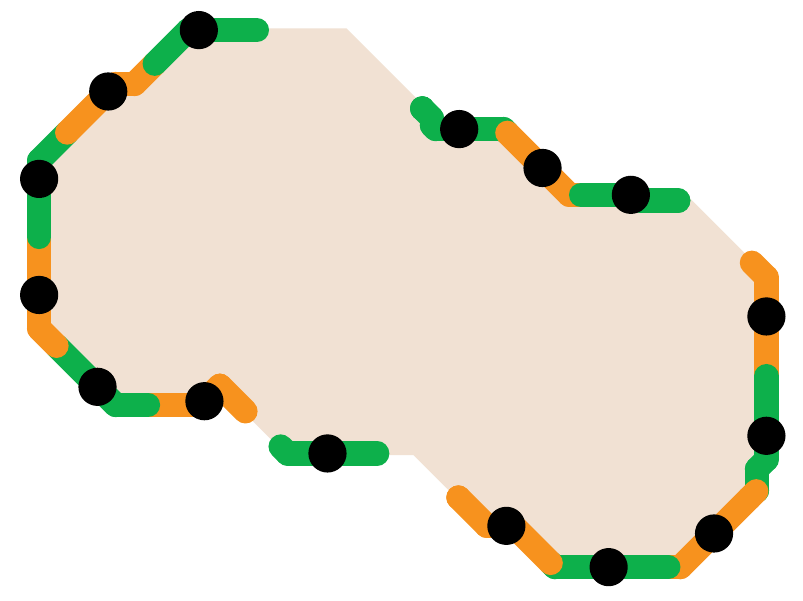}
    \caption{\label{fig:more-spmc-ex} Four examples for the case of single 
		region with multiple components. Problem parameters 
		are as follows: first row, $q = 3$, $n = 10$; second row, $q = 2$, 
		$n = 5$; third row, $q = 4$, $n = 10$; fourth row, $q = 5$, $n = 15$.} 
\end{figure}
\else
\begin{figure}[ht!]
    %\vspace*{-3mm}
    \centering
    \includegraphics[keepaspectratio, scale=0.4]{./figures/spmc-example-eps-converted-to.pdf}
    \includegraphics[keepaspectratio, scale=0.4]{./figures/spmc-solution-eps-converted-to.pdf}
    \vspace*{-3mm}
    \caption{\label{fig:spmc-example} 
    An example problem instance when $q = 3$ and $n = 10$. In this case, the 
		optimal cover actually covers one gap.}
    \vspace*{-1mm}
\end{figure}
\fi

\begin{table}[ht!]
    \iffull {} \else \vspace*{-1mm} \fi
    \footnotesize
    \centering
    \begin{tabular}{|c|c|c|c|c|c|c|} 
        \hline
        \diagbox{$q$}{$n$}       & $10^1   $ & $10^2   $ & $10^3  $  & $10^4   $ & $10^5$   \\ \hline       
        \rule{0pt}{2.5ex} $10^2$ & $0.013  $ & $0.015  $ & $0.016 $  & $0.016  $ & $0.017$  \\ \hline   
        \rule{0pt}{2.5ex} $10^3$ & $1.363  $ & $1.595  $ & $1.622 $  & $1.634  $ & $1.641$  \\ \hline   
        \rule{0pt}{2.5ex} $10^4$ & $159.404$ & $188.497$ & $210.492$ & $212.473$ & $212.780$\\ \hline   
    \end{tabular}
    \iffull {} \else \vspace*{-1mm} \fi
    \caption{\label{eval:spmc} \algoSRG~computation time (seconds)}
    %\iffull {} \else \vspace*{-2mm} \fi
\end{table}

\iffull
For \algoSRG, the dependency of the running time over $q^2\log q$ 
appears to be tight (see Fig.~\ref{fig:spmc:qfixn}).
\begin{figure}[ht!]
    \centering
    \includegraphics[keepaspectratio, scale=0.85]{./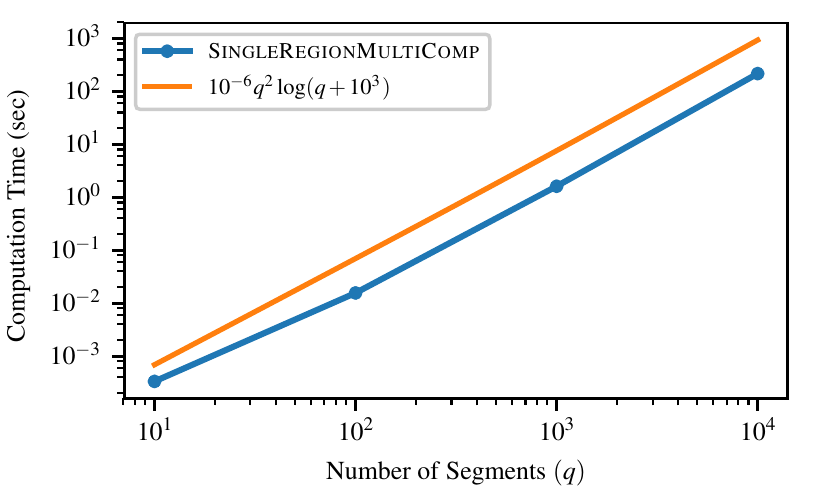}
    \vspace*{-4mm}
    \caption{\label{fig:spmc:qfixn}Running time of \algoSRG 
		v.s. $O(q^2\log(q + 10^3))$.}
    \vspace*{-2mm}
\end{figure}
\else
\fi

%\subsubsection{\algoSRG} 
%To generate a problem of a single perimeter divided by interlaced segments and gaps, 
%we first create an arbitrary polygon $R$. Then, we uniformly randomly sample $2q$ points 
%on $\partial R$ to divide it into $q$ segments and $q$ gaps. 
%An example problem instance along with its optimal cover is shown in 
%Fig.~\ref{fig:spmc-example}. The computation time of 
%\algoSRG~under different $q$ and $n$ is presented in Table~\ref{eval:spmc}.

For multiple perimeters containing multiple components, $m$ polygons 
are created with $len(\partial R_i)$ randomly distributed in $[1, 10]$. 
For setting $q_i$, we fix a $q$ and let $q_i = q(0.5 + random(0, 1))$. 
Representative computation results of \algoMRG are listed in 
Table~\ref{eval:mpmc}.
%whose perimeters' lengths are uniformly randomly sampled from interval $(1, 10)$. 
%The computation time under different $m$, $q$, and $n$ values are provided in Table~\ref{eval:mpmc}. 
%We observe that with the same $q$, $n$ parameters, the computation time is directly proportional to $m$.
\iffull
\begin{table}[ht!]
    \footnotesize
    \centering
    \begin{tabular}{|c|c|c|c|c|c|c|} 
        \hline
        \multirow{2}{*}{$q$} & \multirow{2}{*}{$n$} & \multicolumn{5}{|c|}{$m$} \\ \cline{3-7}
        \rule{0pt}{2.5ex} & & $10$ & $20$ & $30$ & $40$ & $50$ \\ \hline
        \rule{0pt}{2.5ex} $10^1$ & $10^2$ & $ 0.015$ & $ 0.027$ & $ 0.039$ & $ 0.045$ & $ 0.054$ \\ \hline
        \rule{0pt}{2.5ex} $10^1$ & $10^3$ & $ 0.047$ & $ 0.063$ & $ 0.076$ & $ 0.091$ & $ 0.108$ \\ \hline
        \rule{0pt}{2.5ex} $10^2$ & $10^2$ & $ 1.492$ & $ 2.784$ & $ 4.168$ & $ 5.404$ & $ 6.444$ \\ \hline
        \rule{0pt}{2.5ex} $10^2$ & $10^3$ & $ 2.191$ & $ 3.771$ & $ 5.523$ & $ 7.707$ & $ 9.369$ \\ \hline
        \rule{0pt}{2.5ex} $10^2$ & $10^4$ & $ 7.105$ & $ 9.619$ & $11.369$ & $12.760$ & $15.107$ \\ \hline
    \end{tabular}
    \caption{\label{eval:mpmc} \algoMRG~computation time (seconds)}
\end{table}
\else
\begin{table}[ht!]
    \vspace*{-2mm}
    \footnotesize
    \centering
    \begin{tabular}{|c|c|c|c|c|c|c|} 
        \hline
        \multirow{2}{*}{$q$} & \multirow{2}{*}{$n$} & \multicolumn{5}{|c|}{$m$} \\ \cline{3-7}
        \rule{0pt}{2.5ex} & & $10$ & $20$ & $30$ & $40$ & $50$ \\ \hline
        %\rule{0pt}{2.5ex} $10^1$ & $10^2$ & $ 0.015$ & $ 0.027$ & $ 0.039$ & $ 0.045$ & $ 0.054$ \\ \hline
        \rule{0pt}{2.5ex} $10^1$ & $10^3$ & $ 0.047$ & $ 0.063$ & $ 0.076$ & $ 0.091$ & $ 0.108$ \\ \hline
        %\rule{0pt}{2.5ex} $10^2$ & $10^2$ & $ 1.492$ & $ 2.784$ & $ 4.168$ & $ 5.404$ & $ 6.444$ \\ \hline
        \rule{0pt}{2.5ex} $10^2$ & $10^3$ & $ 2.191$ & $ 3.771$ & $ 5.523$ & $ 7.707$ & $ 9.369$ \\ \hline
        \rule{0pt}{2.5ex} $10^2$ & $10^4$ & $ 7.105$ & $ 9.619$ & $11.369$ & $12.760$ & $15.107$ \\ \hline
    \end{tabular}
    \vspace*{-2mm}
    \caption{\label{eval:mpmc} \algoMRG~computation time (seconds)}
    \vspace*{-3mm}
\end{table}
\fi

\iffull
As for running time, Fig.~\ref{fig:mpmc:m} shows 
the dependency on the number of regions $m$ appears to be linear with 
$q$ fixed (recall we set $q_i = q(0.5 + random(0,1))$). This is tight
in viewing the main running time of \algoMRG which is 
$O((\sum_{1\le i \le m} q_i^2) \log(n + \sum_{1\le i \le m} q_i))$; if
$q_i$ is fixed, then the time is linear with respect to $m$. An example 
computation result for $m = 3$ is illustrated in Fig.~\ref{fig:mpmc-ex}. 

\begin{figure}[ht!]
    \centering
    \includegraphics[keepaspectratio, scale=0.85]{./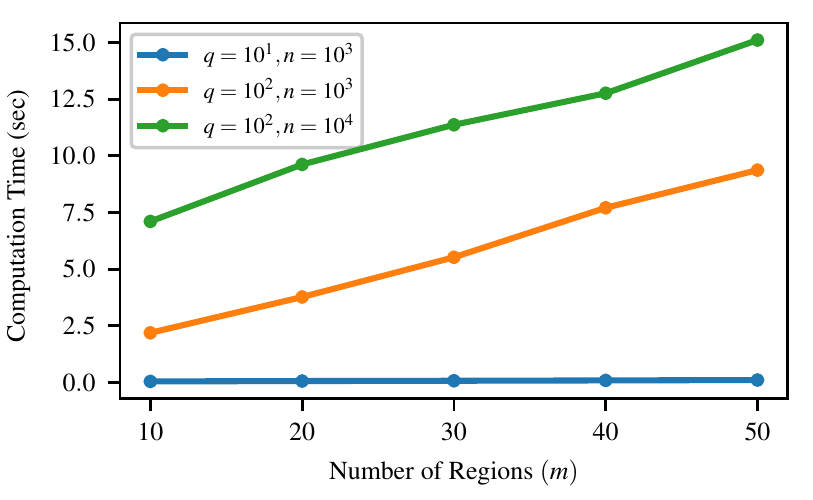}
    \caption{\label{fig:mpmc:m}Running time of \algoMRG 
		v.s. the number of regions.}
\end{figure}

\begin{figure}[ht!]
    % \vspace*{-3mm}
    \centering
    \includegraphics[keepaspectratio, scale=0.25]{./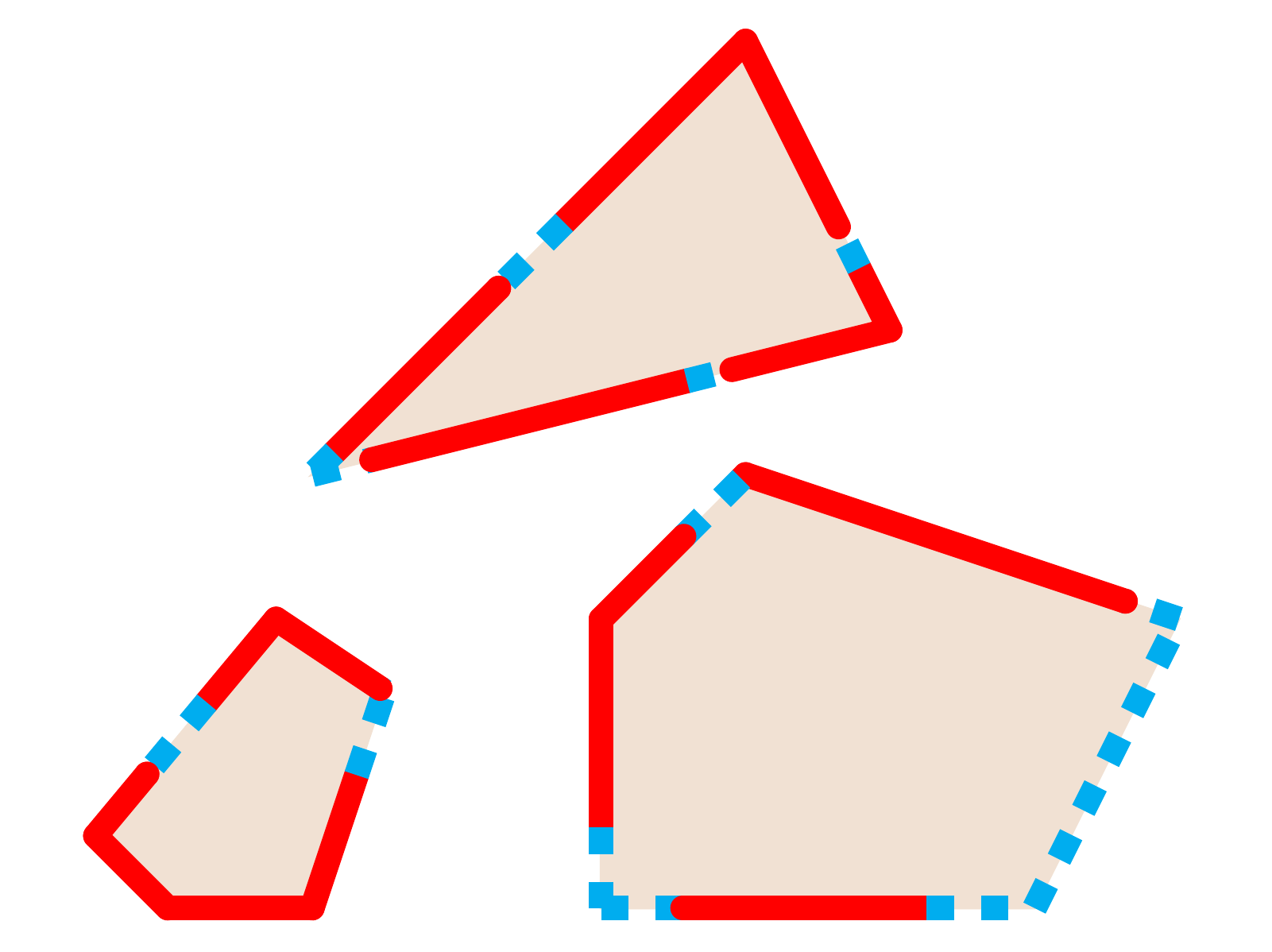}
    \includegraphics[keepaspectratio, scale=0.25]{./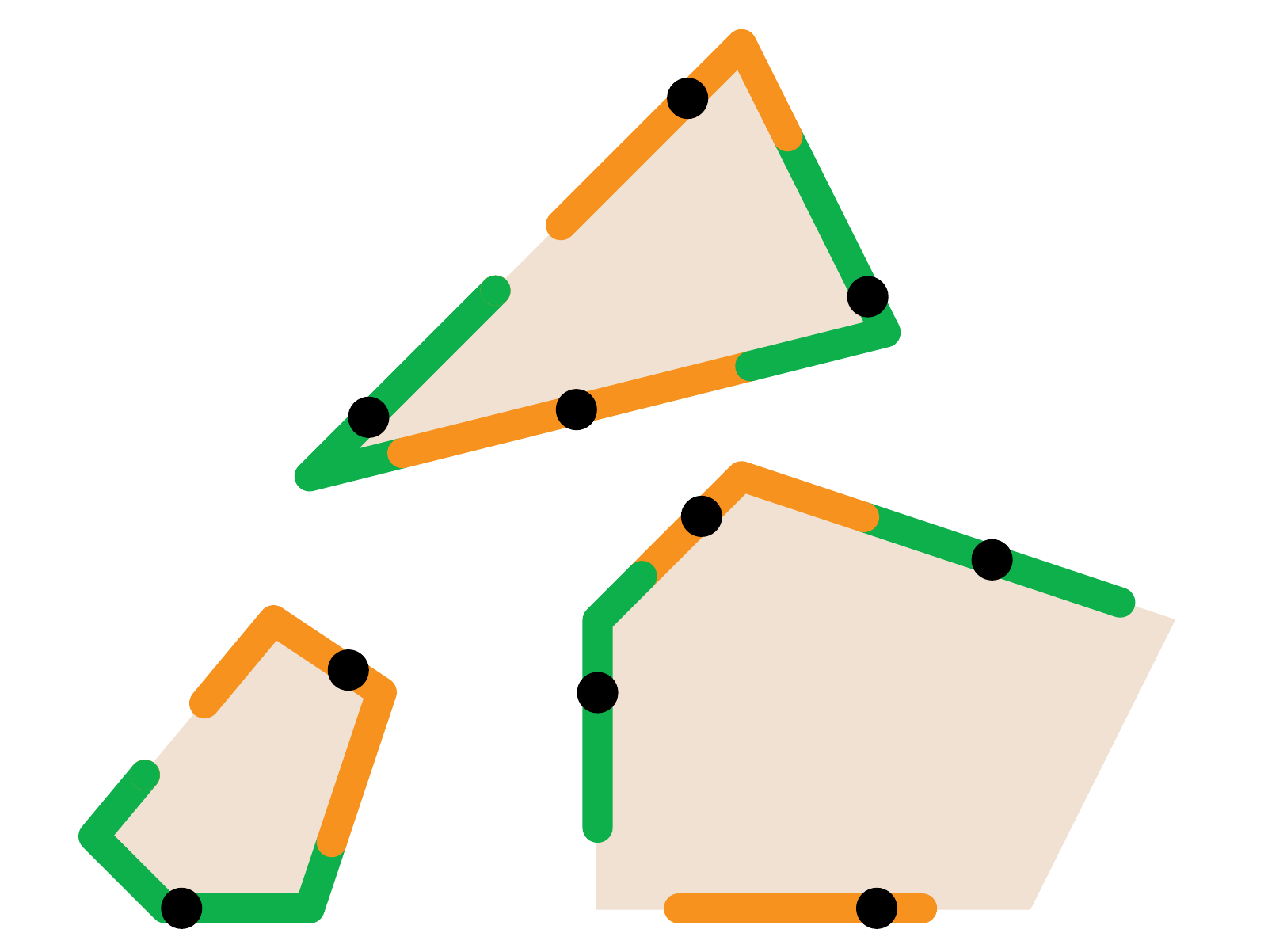}
    \caption{\label{fig:mpmc-ex} An example instance when $m = 3$, $n = 10$.} 
\end{figure}
\else
Due to limited space, only selected essential performance data is 
presented here. More complete performance data and associate analysis 
can be found \cite{FenHanGaoYuRSS19EXT}. 
\fi

\subsection{Two Applications Scenarios}
\noindent\textbf{Securing a perimeter}. As a first application, consider
a situation where a crime has just been committed at the Edinburgh 
Castle (see Fig.~\ref{fig:edinburgh}). The culprit remains in the confines 
of the castle but is mixed within many guests at the scene. As the 
situation is being investigated and suppose that the brick colored 
buildings are secured, guards (either personnel or a number of drones) may 
be deployed to ensure the culprit does not escape by climbing down the 
castle walls. Using \algoSRG, a deployment plan can be quickly computed 
given the amount of resources at hand so that each guard only needs to 
secure a minimum length along the castle walls. Fig.~\ref{fig:edinburgh} 
shows the optimal deployment plan for $15$ guards. 

\begin{figure}[ht]
	\iffull {} \else \vspace*{-2mm} \fi
	\begin{center}
		\begin{overpic}[width={\iffull 2.5in \else 2in \fi},tics=5]{./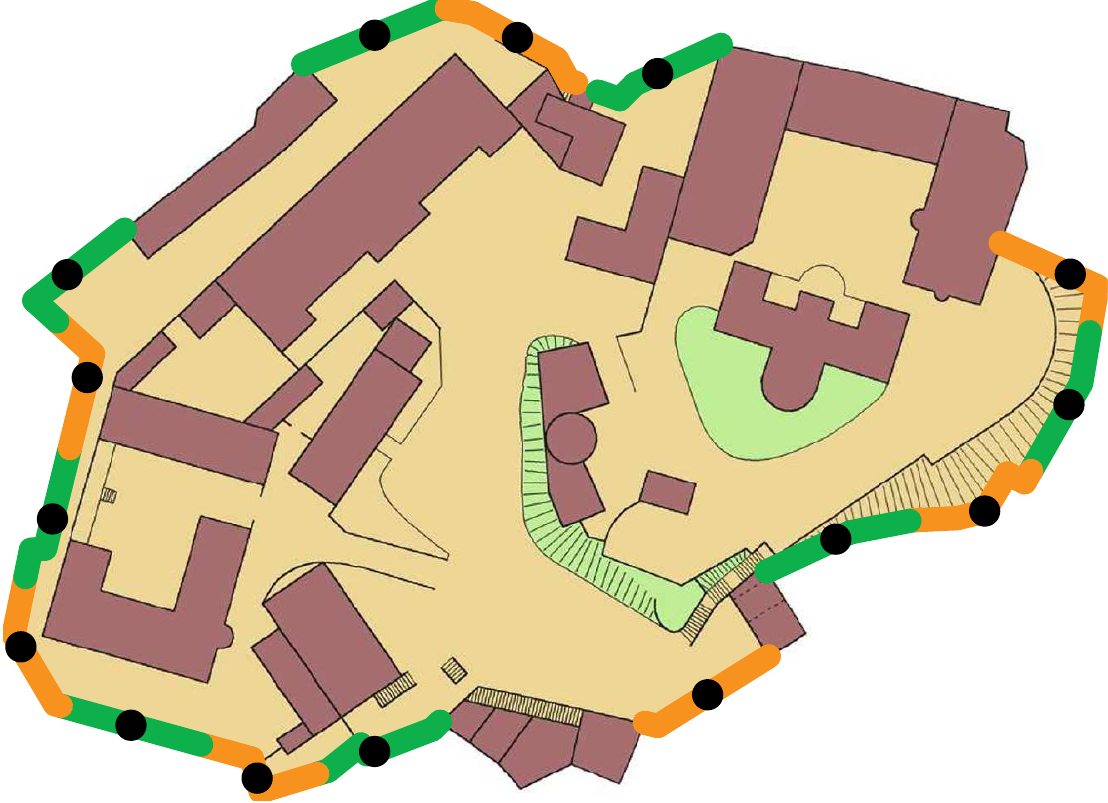}
			% \put(82,44){{\small $\W$}}
		\end{overpic}
	\end{center}
	\iffull {} \else \vspace*{-4.5mm} \fi
	\caption{\label{fig:edinburgh} Optimal deployment of $15$ guards around 
	walls of the Edinburgh Castle. The brick colored structures are buildings 
	that create gaps along the boundary.}
	\iffull {} \else \vspace*{-3mm} \fi
\end{figure}

\iffull
Then, Fig.~\ref{fig:more-edinburgh} shows the deployment plan for $n = 5, 10, 
20, 30$ guards. As the number of guards changes from $5$ to $10$, the gap on the
lower left side is no longer covered due to the availability of more guards.
Similarly, as the number of guards changes from $20$ to $30$, the very small 
gap on the top no longer needs to be covered.  

\begin{figure}[ht!]
	\begin{center}
		\begin{overpic}[width={\iffull 2.5in \else 2.05in \fi},tics=5]{./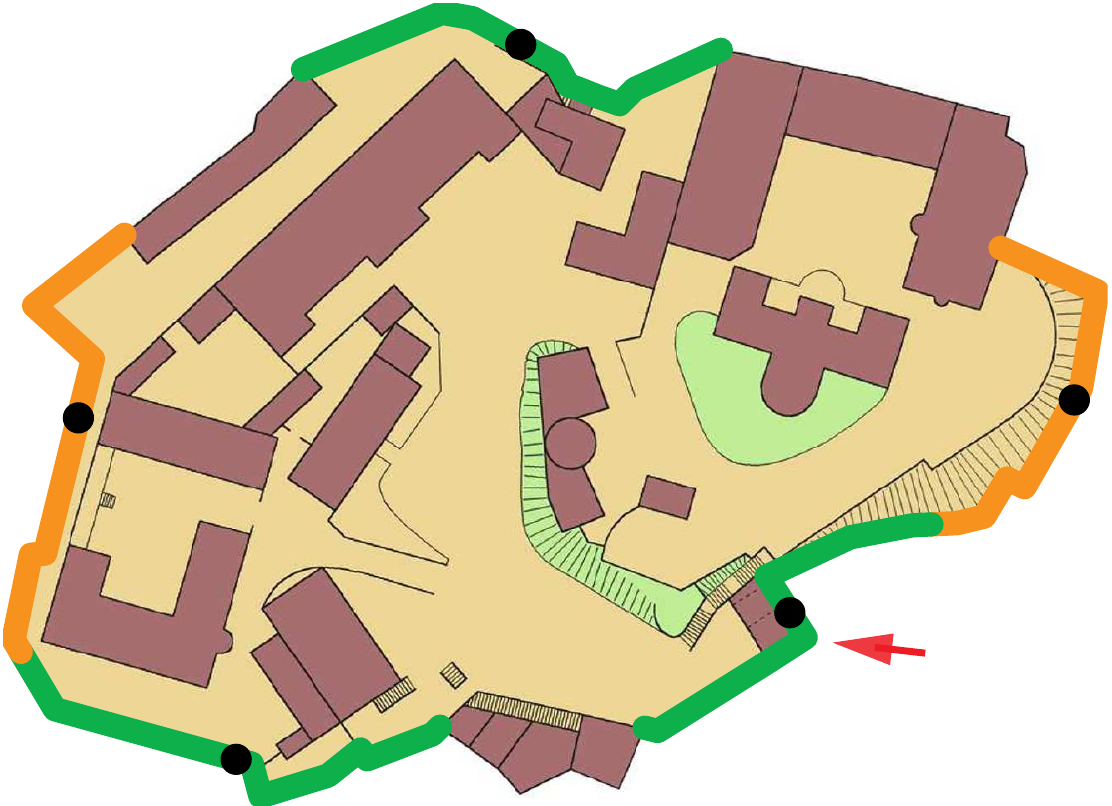}
		\end{overpic}
    \end{center}
		\vspace*{2mm}
	\begin{center}
		\begin{overpic}[width={\iffull 2.5in \else 2.05in \fi},tics=5]{./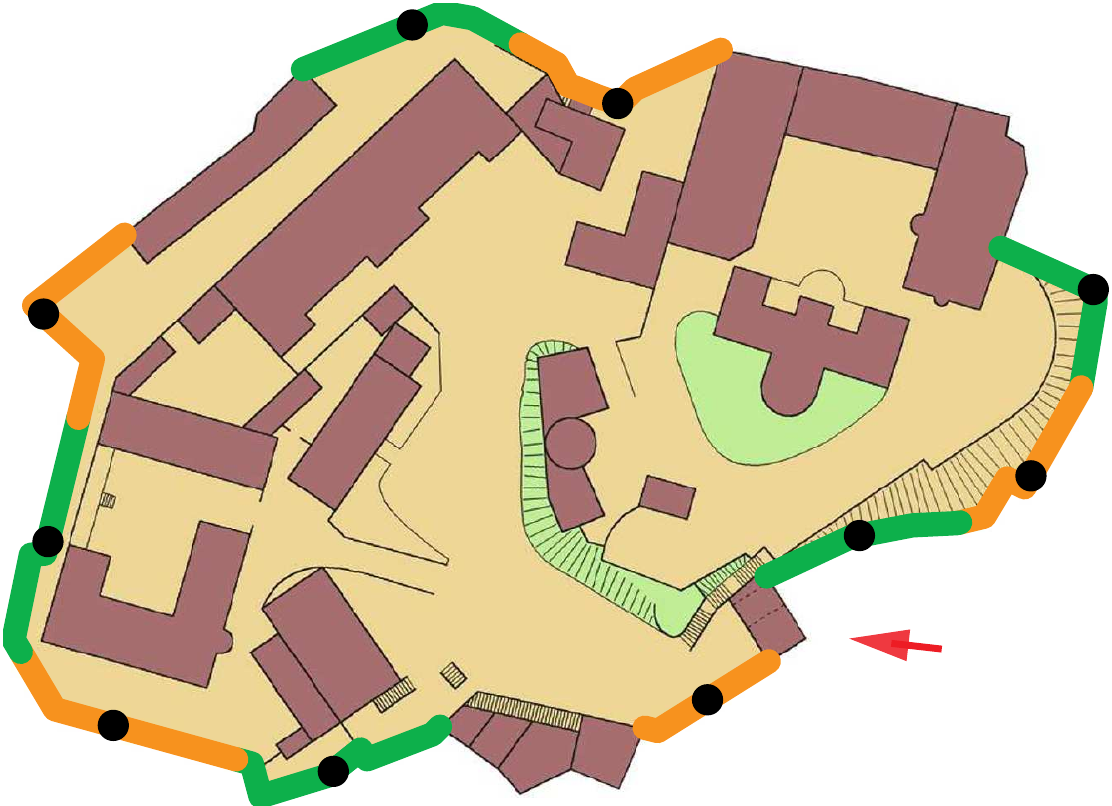}
		\end{overpic}
    \end{center}
		\vspace*{2mm}
	\begin{center}
		\begin{overpic}[width={\iffull 2.5in \else 2.05in \fi},tics=5]{./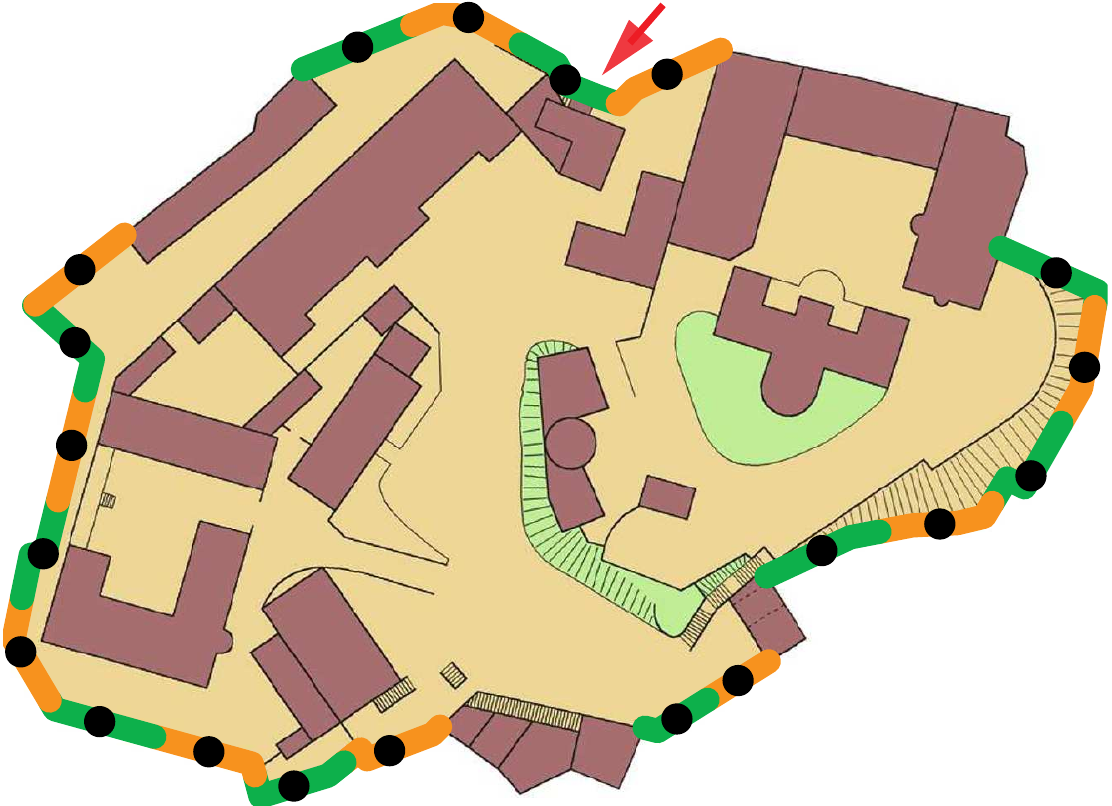}
		\end{overpic}
    \end{center}
		\vspace*{2mm}
	\begin{center}
		\begin{overpic}[width={\iffull 2.5in \else 2.05in \fi},tics=5]{./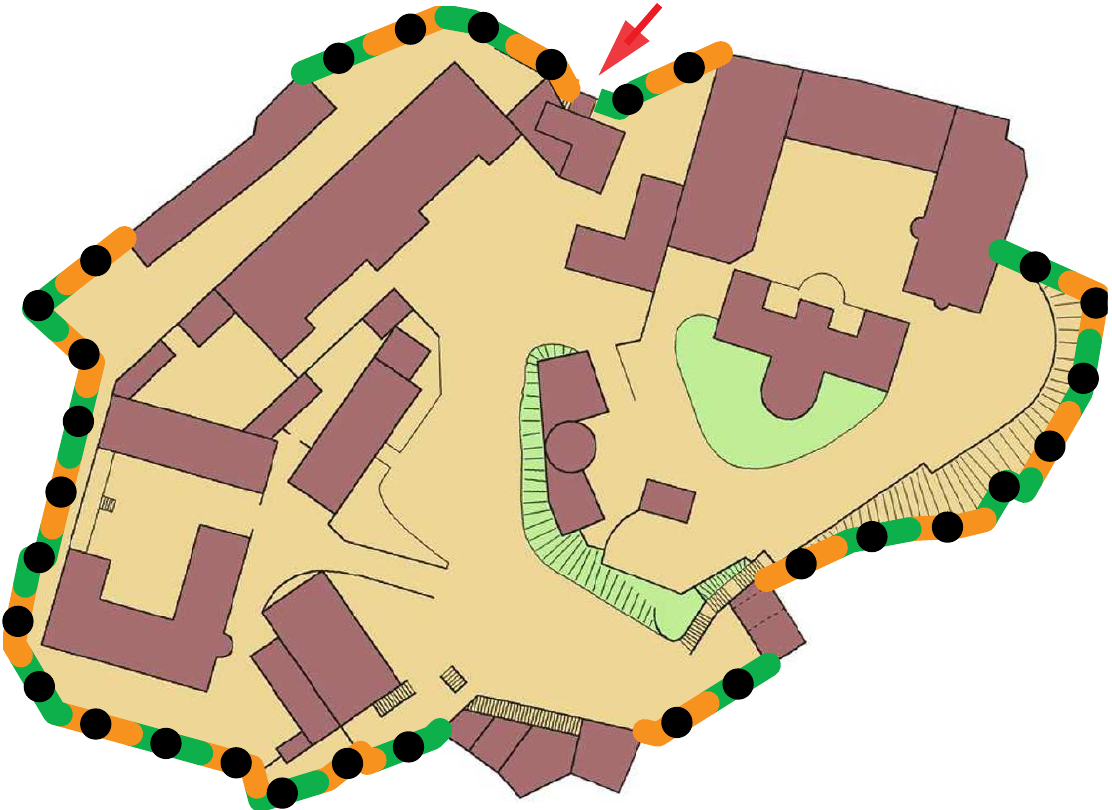}
		\end{overpic}
	\end{center}
	\vspace*{-2mm}
	\caption{\label{fig:more-edinburgh} Optimal deployment of $5, 10, 20, 30$ 
	guards around the Edinburgh Castle.}
	%\vspace*{-3mm}
\end{figure}
\else
\fi

\noindent\textbf{Fire monitoring}. In a second application, consider 
Fig.~\ref{fig:forest} where a forest fire has just been put out in 
multiple regions. As there is still some chance that the fire may 
rekindle and spread, for prevention, a team of firefighters is to be 
deployed to watch for the possible spreading of the fire. Here, in 
addition to using \algoMRG to compute optimal locations for deploying 
the firefighters, we also generate minimum time trajectories for the 
firefighters to reach their target locations while avoiding going 
through the dangerous forests. This is done via solving a bottleneck 
assignment problem \cite{burkard1999linear}.
Note that the lake region creates gaps that cannot be traveled by the 
firefighters; this can be handled by making these gaps infinitely large. 
Fig.~\ref{fig:forest} shows the optimal locations for $34$ firefighters. 
Animations of the deployment process and other test cases can be found 
in the accompanying video. 

\begin{figure}[ht]
	\iffull {} \else \vspace*{-2mm} \fi
	\begin{center}
		\begin{overpic}[width={\iffull 2.5in \else 2.3in \fi},tics=5]{./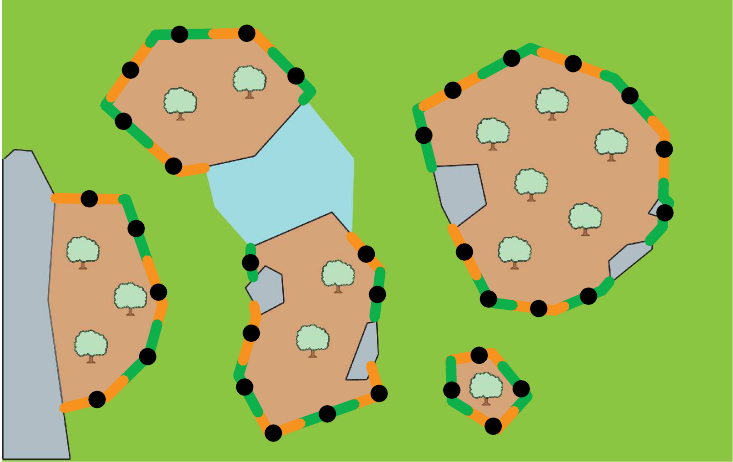}
			%\put(82,44){{\small $\W$}}
		\end{overpic}
	\end{center}
	\iffull {} \else \vspace*{-4.5mm} \fi
	\caption{\label{fig:forest}  Optimal deployment of $34$ firefighters for 
	forest fire rekindling prevention.}
	\iffull {} \else \vspace*{-5mm} \fi
\end{figure}

\iffull
Additional computational results for the forest fire monitoring
case is illustrated in Fig.~\ref{fig:more-forest}. Behavior similar to that 
from the castle case can be observed here, e.g., from $30$ to $40$ 
guards, the small gap on the right is no longer covered. 

\begin{figure}[ht!]
	\begin{center}
		\begin{overpic}[width={\iffull 2.5in \else 2.4in \fi},tics=5]{./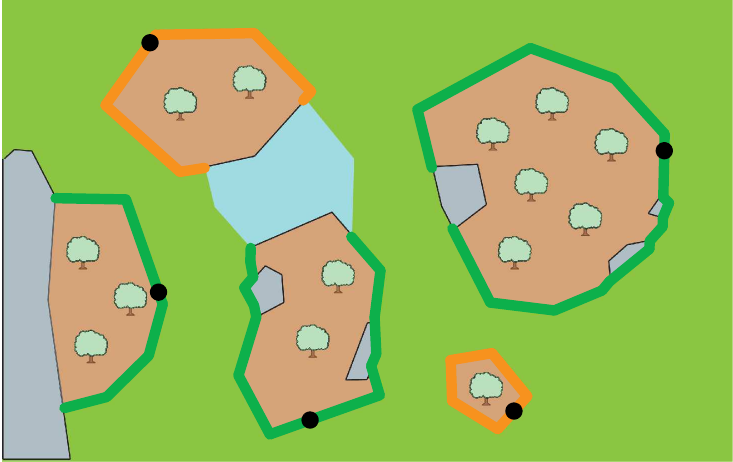}
		\end{overpic}
    \end{center}
	\begin{center}
		\begin{overpic}[width={\iffull 2.5in \else 2.4in \fi},tics=5]{./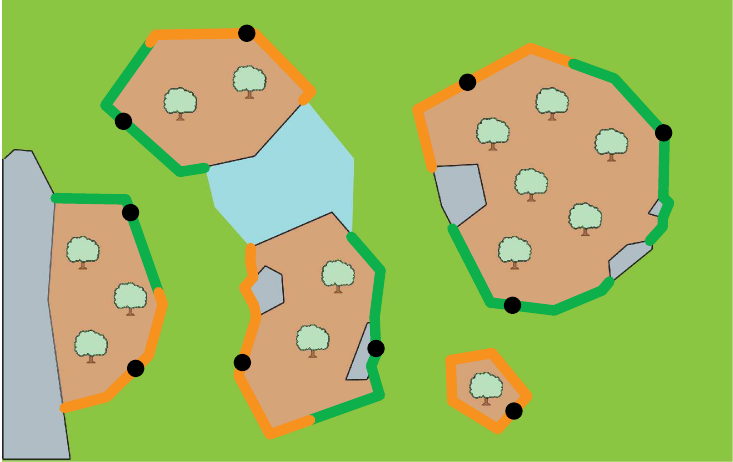}
		\end{overpic}
    \end{center}
	\begin{center}
		\begin{overpic}[width={\iffull 2.5in \else 2.4in \fi},tics=5]{./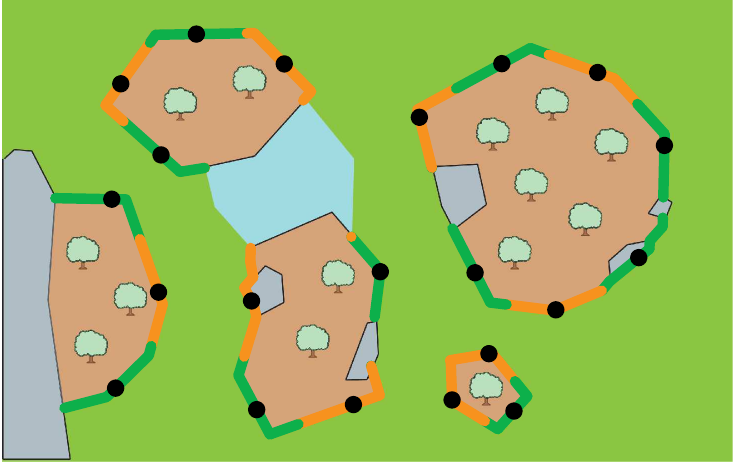}
		\end{overpic}
    \end{center}
	\begin{center}
		\begin{overpic}[width={\iffull 2.5in \else 2.4in \fi},tics=5]{./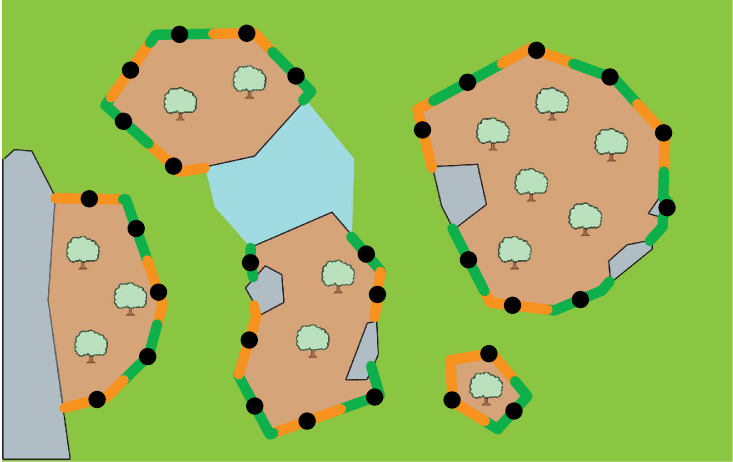}
		\end{overpic}
    \end{center}
	\begin{center}
		\begin{overpic}[width={\iffull 2.5in \else 2.4in \fi},tics=5]{./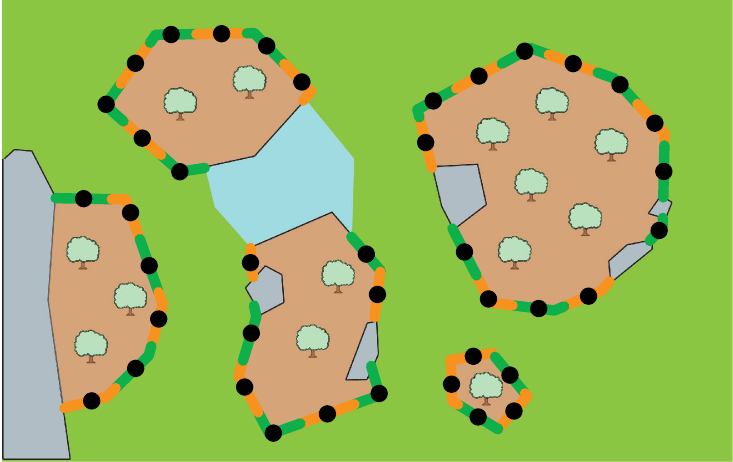}
		\end{overpic}
	\end{center}
	\vspace*{-2mm}
	\caption{\label{fig:more-forest}  Optimal deployment of $5, 10, 20, 30, 40$ firefighters for 
	forest fire monitoring.}
	\vspace*{-3mm}
\end{figure}
\else
\fi

\section{Conclusion and Discussion}\label{section:conclusion}
In this paper, we propose the \opg problem to model the allocation of 
a large number of robots to cover complex 1D topological domains with 
optimality guarantees. For all variants under the \opg formulation 
umbrella, we have developed highly efficient algorithms for solving 
\opg exactly. In addition to rigorous proofs backed by formal analysis, 
extensive computational experiments further confirm the effectiveness of 
these algorithms. Moreover, practical relevance of \opg is demonstrated 
through the integration of \opg into realistic task (assignment) and motion 
planning scenarios. 

The study raises many additional interesting open questions; we mention 
a few here. 
First, the approach taken in this work is a {\em centralized} one where 
decision is made at the global level. It would be highly interesting to 
explore whether the same can be achieved with {\em decentralized} methods,
which have many advantages. For example, it may be the case that the 
gaps along the boundaries are not known {\em a priori} and must be measured
by the robots. In such cases, a centralized plan can be hard to come by. 
Second, as mentioned in Section~\ref{section:problem}, the 
current \opg formulation assumes that the robots are confined to the 
boundaries $\partial \R$, which is one of many possible choices 
in terms of the robots' sensing and/or motion capabilities. In future study,
we plan to examine additional practical robot sensing and motion models. 
Third, as exact optimal algorithms are emphasized here, issues including 
uncertainty and robustness have not been touched in the current treatment, 
which are important elements when it comes to the deployment of a robotic 
swarm to tackle real-world challenges. 

\textbf{Acknowledgments}.
This work is supported by NSF grants IIS-1617744, IIS-1734419, and IIS-1845888. 
Opinions or findings expressed in this paper do not necessarily reflect 
the views of the sponsors. 
\iffull
\else
The authors would like to thank the anonymous RSS reviewers for their 
constructive comments.
\fi

%\begin{figure}[ht]
%\begin{center}
%\begin{overpic}[width={\ifoc 3.6in \else 2.66in \fi},tics=5]
%{./figures/example.eps}
%\put(48,40.5){{\small $w$}}
%\put(79.6,18){{\small $h$}}
%\put(23,4){{\small $\W$}}
%\end{overpic}
%\end{center}
%\vspace*{-1mm}
%\caption{\label{fig:tri-grid} An example of a workspace $\W$ with $w = 
%14$ and $h = 3\frac{4}{\sqrt{3}} + 2$, i.e., $n_1 = 3$ and $n_2 = 3$.
%The embedded triangular grid is at least distance $1$ from the boundary of 
%$\W$. The grid has $6$ columns and $2+$ zigzagging rows.}
%\end{figure}

\bibliographystyle{IEEEtran}
\bibliography{bib}

% Generated by IEEEtran.bst, version: 1.14 (2015/08/26)
\begin{thebibliography}{10}
\providecommand{\url}[1]{#1}
\csname url@samestyle\endcsname
\providecommand{\newblock}{\relax}
\providecommand{\bibinfo}[2]{#2}
\providecommand{\BIBentrySTDinterwordspacing}{\spaceskip=0pt\relax}
\providecommand{\BIBentryALTinterwordstretchfactor}{4}
\providecommand{\BIBentryALTinterwordspacing}{\spaceskip=\fontdimen2\font plus
\BIBentryALTinterwordstretchfactor\fontdimen3\font minus
  \fontdimen4\font\relax}
\providecommand{\BIBforeignlanguage}[2]{{%
\expandafter\ifx\csname l@#1\endcsname\relax
\typeout{** WARNING: IEEEtran.bst: No hyphenation pattern has been}%
\typeout{** loaded for the language `#1'. Using the pattern for}%
\typeout{** the default language instead.}%
\else
\language=\csname l@#1\endcsname
\fi
#2}}
\providecommand{\BIBdecl}{\relax}
\BIBdecl

\bibitem{arai2002advances}
T.~Arai, E.~Pagello, and L.~E. Parker, ``Advances in multi-robot systems,''
  \emph{IEEE Transactions on robotics and automation}, vol.~18, no.~5, pp.
  655--661, 2002.

\bibitem{gerkey2004formal}
B.~P. Gerkey and M.~J. Matari{\'c}, ``A formal analysis and taxonomy of task
  allocation in multi-robot systems,'' \emph{The International Journal of
  Robotics Research}, vol.~23, no.~9, pp. 939--954, 2004.

\bibitem{ren2008distributed}
W.~Ren and R.~W. Beard, \emph{Distributed consensus in multi-vehicle
  cooperative control}.\hskip 1em plus 0.5em minus 0.4em\relax Springer, 2008.

\bibitem{bullo2009distributed}
F.~Bullo, J.~Cortes, and S.~Martinez, \emph{Distributed control of robotic
  networks: a mathematical approach to motion coordination algorithms}.\hskip
  1em plus 0.5em minus 0.4em\relax Princeton University Press, 2009, vol.~27.

\bibitem{ando1999distributed}
H.~Ando, Y.~Oasa, I.~Suzuki, and M.~Yamashita, ``Distributed memoryless point
  convergence algorithm for mobile robots with limited visibility,'' \emph{IEEE
  Transactions on Robotics and Automation}, vol.~15, no.~5, pp. 818--828, 1999.

\bibitem{jadbabaie2003coordination}
A.~Jadbabaie, J.~Lin, and A.~S. Morse, ``Coordination of groups of mobile
  autonomous agents using nearest neighbor rules,'' \emph{IEEE Transactions on
  automatic control}, vol.~48, no.~6, pp. 988--1001, 2003.

\bibitem{olfati2004consensus}
R.~Olfati-Saber and R.~M. Murray, ``Consensus problems in networks of agents
  with switching topology and time-delays,'' \emph{IEEE Transactions on
  automatic control}, vol.~49, no.~9, pp. 1520--1533, 2004.

\bibitem{ren2005consensus}
W.~Ren and R.~W. Beard, ``Consensus seeking in multiagent systems under
  dynamically changing interaction topologies,'' \emph{IEEE Transactions on
  automatic control}, vol.~50, no.~5, pp. 655--661, 2005.

\bibitem{cheng2008almost}
P.~Cheng and V.~Kumar, ``An almost communication-less approach to task
  allocation for multiple unmanned aerial vehicles,'' in \emph{Robotics and
  Automation, 2008. ICRA 2008. IEEE International Conference on}.\hskip 1em
  plus 0.5em minus 0.4em\relax IEEE, 2008, pp. 1384--1389.

\bibitem{mesbahi2010graph}
M.~Mesbahi and M.~Egerstedt, \emph{Graph theoretic methods in multiagent
  networks}.\hskip 1em plus 0.5em minus 0.4em\relax Princeton University Press,
  2010, vol.~33.

\bibitem{yu2012rendezvous}
J.~Yu, S.~M. LaValle, and D.~Liberzon, ``Rendezvous without coordinates,''
  \emph{IEEE Transactions on Automatic Control}, vol.~57, no.~2, pp. 421--434,
  2012.

\bibitem{smith2009monotonic}
S.~L. Smith and F.~Bullo, ``Monotonic target assignment for robotic networks,''
  \emph{IEEE Transactions on Automatic Control}, vol.~54, no.~9, pp.
  2042--2057, 2009.

\bibitem{ayanian2010decentralized}
N.~Ayanian and V.~Kumar, ``Decentralized feedback controllers for multiagent
  teams in environments with obstacles,'' \emph{IEEE Transactions on Robotics},
  vol.~26, no.~5, pp. 878--887, 2010.

\bibitem{liu2011multi}
L.~Liu and D.~A. Shell, ``Multi-level partitioning and distribution of the
  assignment problem for large-scale multi-robot task allocation,''
  \emph{Robotics: Science and Systems VII; MIT Press: Cambridge, MA, USA}, pp.
  26--33, 2011.

\bibitem{liu2013optimal}
------, ``Optimal market-based multi-robot task allocation via strategic
  pricing.'' in \emph{Robotics: Science and Systems}, vol.~9, no.~1, 2013, pp.
  33--40.

\bibitem{turpin2014goal}
M.~Turpin, K.~Mohta, N.~Michael, and V.~Kumar, ``Goal assignment and trajectory
  planning for large teams of interchangeable robots,'' \emph{Autonomous
  Robots}, vol.~37, no.~4, pp. 401--415, 2014.

\bibitem{turpin2014capt}
M.~Turpin, N.~Michael, and V.~Kumar, ``Capt: Concurrent assignment and planning
  of trajectories for multiple robots,'' \emph{The International Journal of
  Robotics Research}, vol.~33, no.~1, pp. 98--112, 2014.

\bibitem{alonso2015multi}
J.~Alonso-Mora, S.~Baker, and D.~Rus, ``Multi-robot navigation in formation via
  sequential convex programming,'' in \emph{Intelligent Robots and Systems
  (IROS), 2015 IEEE/RSJ International Conference on}.\hskip 1em plus 0.5em
  minus 0.4em\relax IEEE, 2015, pp. 4634--4641.

\bibitem{SolYu15}
K.~Solovey, J.~Yu, O.~Zamir, and D.~Halperin, ``Motion planning for unlabeled
  discs with optimality guarantees,'' in \emph{Robotics: Science and Systems},
  2015.

\bibitem{soltero2014decentralized}
D.~E. Soltero, M.~Schwager, and D.~Rus, ``Decentralized path planning for
  coverage tasks using gradient descent adaptive control,'' \emph{The
  International Journal of Robotics Research}, vol.~33, no.~3, pp. 401--425,
  2014.

\bibitem{howard2002mobile}
A.~Howard, M.~J. Matari{\'c}, and G.~S. Sukhatme, ``Mobile sensor network
  deployment using potential fields: A distributed, scalable solution to the
  area coverage problem,'' in \emph{Distributed Autonomous Robotic Systems
  5}.\hskip 1em plus 0.5em minus 0.4em\relax Springer, 2002, pp. 299--308.

\bibitem{schwager2009optimal}
M.~Schwager, B.~J. Julian, and D.~Rus, ``Optimal coverage for multiple hovering
  robots with downward facing cameras,'' in \emph{2009 IEEE international
  conference on robotics and automation}.\hskip 1em plus 0.5em minus
  0.4em\relax IEEE, 2009, pp. 3515--3522.

\bibitem{o1987art}
J.~O'rourke, \emph{Art gallery theorems and algorithms}.\hskip 1em plus 0.5em
  minus 0.4em\relax Oxford University Press Oxford, 1987, vol.~57.

\bibitem{shermer1992recent}
T.~C. Shermer, ``Recent results in art galleries (geometry),''
  \emph{Proceedings of the IEEE}, vol.~80, no.~9, pp. 1384--1399, 1992.

\bibitem{klee1969every}
V.~Klee, ``Is every polygonal region illuminable from some point?'' \emph{The
  American Mathematical Monthly}, vol.~76, no.~2, pp. 180--180, 1969.

\bibitem{lozano1979algorithm}
T.~Lozano-P{\'e}rez and M.~A. Wesley, ``An algorithm for planning
  collision-free paths among polyhedral obstacles,'' \emph{Communications of
  the ACM}, vol.~22, no.~10, pp. 560--570, 1979.

\bibitem{lee1986computational}
D.~Lee and A.~Lin, ``Computational complexity of art gallery problems,''
  \emph{IEEE Transactions on Information Theory}, vol.~32, no.~2, pp. 276--282,
  1986.

\bibitem{thue1910dichteste}
A.~Thue, \emph{{\"U}ber die dichteste Zusammenstellung von kongruenten Kreisen
  in einer Ebene, von Axel Thue...}\hskip 1em plus 0.5em minus 0.4em\relax J.
  Dybwad, 1910.

\bibitem{hales2005proof}
T.~C. Hales, ``A proof of the kepler conjecture,'' \emph{Annals of
  mathematics}, pp. 1065--1185, 2005.

\bibitem{drezner1995facility}
Z.~Drezner, \emph{Facility location: a survey of applications and
  methods}.\hskip 1em plus 0.5em minus 0.4em\relax Springer Verlag, 1995.

\bibitem{cortes2004coverage}
J.~Cortes, S.~Martinez, T.~Karatas, and F.~Bullo, ``Coverage control for mobile
  sensing networks,'' \emph{IEEE Transactions on robotics and Automation},
  vol.~20, no.~2, pp. 243--255, 2004.

\bibitem{pavone2009equitable}
M.~Pavone, A.~Arsie, E.~Frazzoli, and F.~Bullo, ``Equitable partitioning
  policies for robotic networks,'' in \emph{Robotics and Automation, 2009.
  ICRA'09. IEEE International Conference on}.\hskip 1em plus 0.5em minus
  0.4em\relax IEEE, 2009, pp. 2356--2361.

\bibitem{martinez2010distributed}
S.~Mart{\'\i}nez, ``Distributed interpolation schemes for field estimation by
  mobile sensor networks,'' \emph{IEEE Transactions on Control Systems
  Technology}, vol.~18, no.~2, pp. 491--500, 2010.

\bibitem{pierson2017adapting}
A.~Pierson, L.~C. Figueiredo, L.~C. Pimenta, and M.~Schwager, ``Adapting to
  sensing and actuation variations in multi-robot coverage,'' \emph{The
  International Journal of Robotics Research}, vol.~36, no.~3, pp. 337--354,
  2017.

\bibitem{kershner1968paving}
R.~B. Kershner, ``On paving the plane,'' \emph{The American Mathematical
  Monthly}, vol.~75, no.~8, pp. 839--844, 1968.

\bibitem{toth2017handbook}
C.~D. Toth, J.~O'Rourke, and J.~E. Goodman, \emph{Handbook of discrete and
  computational geometry}.\hskip 1em plus 0.5em minus 0.4em\relax Chapman and
  Hall/CRC, 2017.

\bibitem{burkard1999linear}
R.~E. Burkard and E.~Cela, ``Linear assignment problems and extensions,'' in
  \emph{Handbook of combinatorial optimization}.\hskip 1em plus 0.5em minus
  0.4em\relax Springer, 1999, pp. 75--149.

\end{thebibliography}
%\newpage
%\section*{Appendix}\label{section:appendix}
%\input{07-appendix.tex}

\end{document}